\documentclass[10pt]{article}

\usepackage[letterpaper,textwidth=6in,textheight=8in]{geometry}
\usepackage{amsmath,amssymb,amsthm,mathtools,bm}
\usepackage{booktabs}
\usepackage{graphicx}
\usepackage{float}
\usepackage{enumitem}
\usepackage{microtype}
\usepackage{xcolor}
\usepackage{hyperref}

\hypersetup{
  colorlinks=true,
  linkcolor=blue,
  citecolor=blue,
  urlcolor=blue
}

\newtheorem{theorem}{Theorem}[section]
\newtheorem{proposition}[theorem]{Proposition}
\newtheorem{lemma}[theorem]{Lemma}
\newtheorem{corollary}[theorem]{Corollary}
\theoremstyle{definition}
\newtheorem{definition}[theorem]{Definition}

\newcommand{\E}{\mathbb{E}}
\newcommand{\Pp}{\mathbb{P}}
\newcommand{\N}{\mathbb{N}}
\newcommand{\C}{\mathbb{C}}

\newcommand{\A}{\mathcal{A}}
\newcommand{\Sset}{\mathcal{S}}
\newcommand{\Tset}{\mathcal{T}}
\newcommand{\K}{\mathcal{K}}
\newcommand{\G}{\mathcal{G}}
\newcommand{\Hh}{\mathcal{H}}
\newcommand{\Dset}{\mathcal{D}}
\newcommand{\diag}{\operatorname{diag}}
\newcommand{\rank}{\operatorname{rank}}

\newcommand{\Var}{\operatorname{Var}}
\newcommand{\ii}{\mathrm{i}}
\newcommand{\balpha}{\bm{\alpha}}
\newcommand{\bbeta}{\bm{\beta}}
\newcommand{\bt}{\bm{t}}
\newcommand{\br}{\bm{r}}
\newcommand{\btheta}{\bm{\theta}}
\newcommand{\bxi}{\bm{\xi}}
\newcommand{\bXi}{\bm{\Xi}}
\newcommand{\beps}{\bm{\varepsilon}}
\newcommand{\bmone}{\bm{1}}
\newcommand{\norm}[1]{\left\lVert #1\right\rVert}

\title{\bf Coded Hankel Polynomial Chaos:\\
Spectral Identification of Dominant Polynomial-Chaos Modes}

\author{Zhiliang Deng\thanks{School of Mathematical Science, University of Electronic Science and Technology of China, Email: dengzhl@uestc.edu.cn} \and Xiaomei Yang\thanks{School of Mathematics, Southwest Jiaotong University, Email: yangxiaomath@swjtu.edu.cn}}
\date{}

\begin{document}
\maketitle

\begin{abstract}
Identification of dominant polynomial-chaos modes is usually formulated as a
sparse-regression problem on a sampled multivariate polynomial dictionary.  We
develop coded Hankel polynomial chaos (CH-PC), a complementary spectral
formulation for dominant-mode identification.  A finite generating transform
converts PCE coefficients into a coefficient-generating polynomial, and
evaluation along a geometric phase orbit produces a finite exponential sum.
Its model order and spectral nodes are encoded by low-rank Hankel matrices,
while coordinate phase shifts attach root-of-unity labels from which the full
polynomial multi-indices are recovered.  Coordinate-shifted probes are combined
as common-node snapshots, and independent phase encodings provide redundant
representations when a single spectral encoding is poorly conditioned.  For
finite observations, population, finite-data, and observed probes are kept
distinct: sampling or quadrature error and observation error enter as separate
Hankel perturbations, which are then connected to spectral stability, discrete
decoding, and phase voting.  For tensor-product candidate sets, the generating
kernel factorizes into one-dimensional sums and can be evaluated without
assembling the full multivariate PCE design matrix.  Numerical experiments on sparse Legendre benchmarks and a stochastic Darcy
problem illustrate exact recovery, noise stabilization, unknown-order
identification by phase persistence, and dominant-mode recovery for a
PDE-generated quantity of interest.
\end{abstract}

\medskip
\noindent\textbf{Keywords.}
polynomial chaos, uncertainty quantification, sparse approximation, Hankel
matrix, Prony method, matrix pencil, sensitivity analysis

\smallskip
\noindent\textbf{AMS subject classifications.}
65C20, 65D15, 65F15, 41A10

\section{Introduction}
\label{sec:intro}

Polynomial chaos expansion (PCE) represents stochastic model responses in an orthogonal polynomial basis determined by the probability law of the uncertain inputs \cite{Wiener1938, XiuKarniadakis2002}.  In high-dimensional settings, however, even a modest  polynomial degree can produce  a large number of candidate multivariate basis modes, while the response of interest often depends significantly on only a small subset of them. 
Identifying this influential subset is therefore important:  it not only yields a parsimonious representation but also reveals the key variables, polynomial orders, and interactions that drive the response. To address this basis-selection problem, a range of sparse-regression techniques have been developed, including least angle regression (LARS), orthogonal matching pursuit (OMP), $\ell_1$ regularization, compressed sensing, and adaptive greedy strategies \cite{BlatmanSudret2011, DoostanOwhadi2011, EfronEtAl2004, LuthenMarelliSudret2021, RauhutWard2012, TroppGilbert2007}.

This paper develops a complementary spectral viewpoint on dominant-mode
identification based on Hankel matrix-pencil techniques. The underlying spectral principle can be traced back to classical Prony methods,  which recover finite exponential sums from
structured measurements. In this setting, Hankel matrices provide a low-rank representation of the exponential model, while matrix-pencil techniques recover the associated spectral nodes.
 Generalized Prony methods extend this  basic principle beyond exponential sums to
sparse expansions in more general function systems
\cite{PeterPlonka2013,StampferPlonka2020}.  Related spectral recovery constructions have
also been developed for sparse Legendre and Gegenbauer expansions and for
multivariate Prony systems \cite{KunisEtAl2016, PeterPlonkaRosca2013, PottsTasche2016}.  Thus, Prony-type recovery, Hankel low-rank structure, and matrix-pencil techniques are established ingredients rather than new contributions of the present work. 
 The problem addressed here is different: how can ordinary PCE input--output data be transformed into a structured spectral representation whose low-rank Hankel structure reveals the number of dominant modes and whose spectral information retains their polynomial multi-indices, so that support identification can be performed by spectral recovery rather than sparse regression?

The construction begins with a finite generating transform that converts the
PCE coefficients into a multivariate coefficient-generating polynomial.
Evaluation along a geometric phase orbit maps the active multi-indices to a
finite exponential sum, whose model order is identified from Hankel rank and
whose spectral nodes are recovered by a matrix pencil.  A single spectral node,
however, contains a multi-index only through a scalar phase combination.  We
therefore introduce coordinate phase shifts that preserve the nodes while
multiplying their amplitudes by root-of-unity factors.  These factors encode the
individual polynomial degrees and reduce recovery of a multivariate index to
finite coordinate-wise decoding.  The shifted probes also provide multiple
snapshots with the same nodes.  CH-PC combines these common-node snapshots and,
when a single phase is poorly conditioned, repeats the same construction with
independent phase encodings.

The exact spectral construction has a natural finite-data realization.
Data-based probes approximate the required generating functionals, and the
resulting perturbations can be tracked at the joint-Hankel level before being
propagated through spectral recovery and discrete multi-index decoding.  This
separation is useful because the conditioning of a phase is determined not only
by the data error but also by the separation of its encoded nodes.  Repeating
the decoder over independent phases therefore supplies a second form of
redundancy, and support persistence across phases provides a practical route to
stable identification and unknown model order.  CH-PC also has a computational
structure different from dictionary-based sparse regression.  For tensor-product
candidate sets, the generating kernel factorizes into one-dimensional sums, so
spectral probes can be formed without assembling the full multivariate
polynomial dictionary.  This yields a different memory and scaling profile,
although it does not imply a universal runtime advantage.

The main contributions are fourfold.  First, a PCE-specific generating
representation converts dominant polynomial-chaos modes into a finite
exponential system and yields an exact Hankel characterization of the effective
model order.  Second, coordinate phase shifts provide a root-of-unity encoding
that reconstructs the individual degrees and hence the full active
multi-indices.  Third, common-node snapshots and independent phase encodings
provide a redundant spectral recovery mechanism with explicit separation and
support-recovery results.  Fourth, the finite-data analysis separates
integration error from propagated observation error and connects the resulting
Hankel perturbations to model-order identification, discrete decoding, and phase
voting.  The framework is then extended to approximately sparse expansions and
to sensitivity information.  Section~\ref{sec:formulation} develops the exact
spectral representation, Section~\ref{sec:robust} treats the finite-data coded
Hankel method and its stability, Section~\ref{sec:approx-sparse} discusses
approximate sparsity, sensitivity, and methodological scope, and
Section~\ref{sec:numerics} presents the numerical experiments.

\section{Dominant polynomial-chaos modes and exact spectral representation}
\label{sec:formulation}

This section formalizes the object to be identified and derives the exact
population-level representation on which CH-PC is based.  We define dominant
PCE modes through orthogonal coefficient energy, construct the finite
generating-kernel transform, and show how phase coding converts the active
multi-indices into a finite Prony system with an exact Hankel factorization.
Finite observations and their perturbations are introduced only in
Section~\ref{sec:robust}.

\subsection{Polynomial-chaos model and dominant modes}

Let $\bXi=(\Xi_1,\ldots,\Xi_d)$ be a random input vector with probability
measure $\rho$.  We first assume independent components, so that
$\rho=\rho_1\otimes\cdots\otimes\rho_d$.  For the $\ell$th coordinate, let
$\{\psi_n^{(\ell)}\}_{n\ge0}$ be orthonormal in $L^2(\rho_\ell)$.  For a
multi-index $\balpha=(\alpha_1,\ldots,\alpha_d)\in\N_0^d$, define
$\Psi_{\balpha}(\bxi)=\prod_{\ell=1}^{d}
\psi_{\alpha_\ell}^{(\ell)}(\xi_\ell)$.  The tensor-product system then
satisfies
$\E[\Psi_{\balpha}(\bXi)\Psi_{\bbeta}(\bXi)]=\delta_{\balpha\bbeta}$.
Let $\A\subset\N_0^d$ be a finite candidate set and write
$\mathcal P_{\A}=\operatorname{span}\{\Psi_{\balpha}:\balpha\in\A\}$.
For a scalar quantity of interest $Y=Y(\bXi)\in L^2(\rho)$, its orthogonal
projection onto $\mathcal P_{\A}$ is
\begin{equation}
Y_{\A}(\bXi)=\sum_{\balpha\in\A}c_{\balpha}\Psi_{\balpha}(\bXi),
\qquad
c_{\balpha}=\E\left[Y(\bXi)\Psi_{\balpha}(\bXi)\right].
\label{eq:pce-projection}
\end{equation}

Throughout, $\bXi$ denotes the random input governed by $\rho$, while $\bxi$
denotes a realization.  This distinction will become important when the
population generating functionals are replaced by finite-data approximations
in Section~\ref{sec:robust}.

Because the basis is orthonormal, coefficient magnitude has a direct energetic
interpretation.

\begin{proposition}[Orthogonal tail identity]
\label{prop:tail-identity}
For any subset $\Sset\subset\A$, define
$Y_{\Sset}=\sum_{\balpha\in\Sset}c_{\balpha}\Psi_{\balpha}$.  Then
\begin{equation*}
\norm{Y_{\A}-Y_{\Sset}}_{L^2(\rho)}^2=\sum_{\balpha\in\A\setminus\Sset}|c_{\balpha}|^2.
\end{equation*}
\end{proposition}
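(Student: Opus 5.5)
The argument is a direct Parseval computation that uses only the orthonormality $\E[\Psi_{\balpha}(\bXi)\Psi_{\bbeta}(\bXi)]=\delta_{\balpha\bbeta}$ of the tensor-product system.

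First, write the difference explicitly. Since $\Sset\subset\A$ and both $Y_{\A}$ and $Y_{\Sset}$ are finite sums over the same coefficients $c_{\balpha}$ from \eqref{eq:pce-projection}, the terms indexed by $\Sset$ cancel, and
\begin{equation*}
Y_{\A}-Y_{\Sset}=\sum_{\balpha\in\A\setminus\Sset}c_{\balpha}\Psi_{\balpha}.
\end{equation*}
This is a finite linear combination of elements of $L^2(\rho)$, so it lies in $L^2(\rho)$.

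Next, expand the squared norm as a double sum. To cover possibly complex coefficients (the modulus $|c_{\balpha}|$ in the statement suggests this), use the Hermitian inner product $\langle f,g\rangle=\E[f\bar g]$, which reduces to the real one in the real case. Bilinearity over the finite index set gives
\begin{equation*}
\norm{Y_{\A}-Y_{\Sset}}_{L^2(\rho)}^2
=\sum_{\balpha,\bbeta\in\A\setminus\Sset}c_{\balpha}\overline{c_{\bbeta}}\,\E\bigl[\Psi_{\balpha}(\bXi)\overline{\Psi_{\bbeta}(\bXi)}\bigr].
\end{equation*}

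Finally, apply orthonormality. The $\Psi_{\balpha}$ are real-valued products of real orthonormal polynomials, so the conjugate on $\Psi_{\bbeta}$ is immaterial and each expectation equals $\delta_{\balpha\bbeta}$. The double sum therefore collapses to its diagonal, giving $\sum_{\balpha\in\A\setminus\Sset}|c_{\balpha}|^2$.

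No step is a genuine obstacle, because all sums are finite and no convergence question arises. The only point needing care is the orthonormality of the multivariate basis itself. It follows from the independence assumption $\rho=\rho_1\otimes\cdots\otimes\rho_d$, by Fubini, as a product of one-dimensional orthonormality relations; this fact is already asserted just before \eqref{eq:pce-projection}.
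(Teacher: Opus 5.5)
Your proposal is correct and follows the paper's argument: the paper's proof simply invokes Parseval's identity on the finite orthonormal system, and your cancellation to $\sum_{\balpha\in\A\setminus\Sset}c_{\balpha}\Psi_{\balpha}$ followed by the collapse of the double sum to its diagonal spells that identity out. Your Hermitian inner product with real-valued $\Psi_{\balpha}$ is consistent with the paper's conjugate-free relation $\E[\Psi_{\balpha}(\bXi)\Psi_{\bbeta}(\bXi)]=\delta_{\balpha\bbeta}$.
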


\begin{proof}
The result is Parseval's identity on the finite orthonormal system.
\end{proof}

We call $\Sset$ a \emph{dominant support} when it is small relative to $\A$
and captures the principal coefficient energy.  In the exactly sparse case,
$c_{\balpha}=0$ for $\balpha\notin\Sset$.  In the approximately sparse case the
omitted coefficients need not vanish, but we require
\begin{equation*}
\sum_{\balpha\in\A\setminus\Sset}|c_{\balpha}|^2\ll\sum_{\balpha\in\Sset}|c_{\balpha}|^2.
\end{equation*}
The primary identification problem is to determine a dominant support
$\Sset\subset\A$, its effective cardinality $s=|\Sset|$, and the associated
coefficients from input--output information.  The multi-index itself carries
structural information: for example, $\balpha=(3,0,0)$ represents a third-order
contribution in the first input, whereas $\balpha=(1,1,0)$ represents an
interaction between the first two inputs.

\subsection{Finite generating-kernel transform}

For $\bt=(t_1,\ldots,t_d)\in\C^d$, write
$\bt^{\balpha}=\prod_{\ell=1}^{d}t_\ell^{\alpha_\ell}$.

\begin{definition}[Finite PCE generating kernel]
For a finite candidate set $\A$, define
\begin{equation*}
\K_{\A}(\bt,\bxi)
=
\sum_{\balpha\in\A}
\bt^{\balpha}
\Psi_{\balpha}(\bxi),
\end{equation*}
and the generating transform
\begin{equation*}
\G_{\A}Y(\bt)
=
\E
\left[
Y(\bXi)
\K_{\A}(\bt,\bXi)
\right].
\end{equation*}
\end{definition}

\begin{theorem}[Coefficient-generating identity]
\label{thm:generating-identity}
For every $Y\in L^2(\rho)$ and finite $\A$,
\begin{equation*}
\G_{\A}Y(\bt)
=
\sum_{\balpha\in\A}
c_{\balpha}\bt^{\balpha},
\end{equation*}
where $c_{\balpha}$ are the coefficients in \eqref{eq:pce-projection}.
\end{theorem}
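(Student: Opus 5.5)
The identity follows from linearity of expectation over the finite sum defining $\K_{\A}$. I will fix $\bt\in\C^d$ and substitute the definition of the kernel into the generating transform:
\begin{equation*}
\G_{\A}Y(\bt)=\E\Bigl[Y(\bXi)\sum_{\balpha\in\A}\bt^{\balpha}\Psi_{\balpha}(\bXi)\Bigr].
\end{equation*}
Because $\A$ is finite and each $\bt^{\balpha}$ is a deterministic complex constant, I can move the sum and the scalars outside the expectation. This gives $\sum_{\balpha\in\A}\bt^{\balpha}\,\E[Y(\bXi)\Psi_{\balpha}(\bXi)]$. By \eqref{eq:pce-projection}, each expectation is exactly $c_{\balpha}$, and the claimed formula follows.

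The only step that needs checking is integrability, since linearity of expectation requires each term $Y\Psi_{\balpha}$ to lie in $L^1(\rho)$. I will note that $Y\in L^2(\rho)$ by assumption and that $\Psi_{\balpha}\in L^2(\rho)$ with unit norm by the orthonormality $\E[\Psi_{\balpha}\Psi_{\bbeta}]=\delta_{\balpha\bbeta}$. The Cauchy--Schwarz inequality then gives $\E|Y\Psi_{\balpha}|\le\norm{Y}_{L^2(\rho)}$. A finite linear combination of $L^1$ functions is again in $L^1$, so $Y\K_{\A}(\bt,\cdot)$ is integrable, $\G_{\A}Y(\bt)$ is well defined for every $\bt$, and splitting the expectation across the sum is justified.

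No step is a real obstacle; the argument is a direct computation. It is worth recording one consequence for later use. Since the $c_{\balpha}$ are exactly the projection coefficients, $\G_{\A}Y=\G_{\A}Y_{\A}$, because the component of $Y$ orthogonal to $\mathcal P_{\A}$ contributes nothing. So $\G_{\A}Y$ is a polynomial in $\bt$ whose monomial support is contained in the set of $\balpha\in\A$ with $c_{\balpha}\ne0$. This is what makes the generating transform encode the dominant support.
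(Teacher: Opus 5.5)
Your proof is correct and follows the paper's argument: substitute the finite kernel, pull the finite sum and the deterministic factors $\bt^{\balpha}$ out of the expectation, and identify each $\E[Y(\bXi)\Psi_{\balpha}(\bXi)]$ with $c_{\balpha}$. Your Cauchy--Schwarz integrability check adds a detail the paper leaves implicit.
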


\begin{proof}
Because $\A$ is finite,
\begin{align*}
\G_{\A}Y(\bt)=\sum_{\balpha\in\A}
\bt^{\balpha}
\E\left[Y(\bXi)\Psi_{\balpha}(\bXi)\right]=\sum_{\balpha\in\A}c_{\balpha}\bt^{\balpha}.
\end{align*}
\end{proof}

Thus the transform converts the orthogonal PCE coefficients into an ordinary
multivariate coefficient-generating polynomial while preserving the complete
multi-index support.  No derivative of $Y$ is required, and the construction
is independent of the particular orthogonal-polynomial family.

\subsection{Tensorized evaluation of the generating probes}
\label{subsec:fast-probes}

The generating transform is useful computationally only if its probe values can
be formed without replacing one large PCE regression problem by another large
sum.  For tensor-product candidate sets, the kernel has an exact separable
structure that removes the explicit dependence on the full multivariate
cardinality at the level of a single kernel evaluation.

\begin{proposition}[Tensor-product factorization of the PCE kernel]
\label{prop:tensor-kernel}
Let
$\A=\prod_{\ell=1}^{d}\{0,\ldots,p_\ell\}$ and define the one-dimensional
finite kernels
$$
K_{p_\ell}^{(\ell)}(t_\ell,\xi_\ell)
=
\sum_{r=0}^{p_\ell}
t_\ell^r\psi_r^{(\ell)}(\xi_\ell).
$$
Then
$$
\K_{\A}(\bt,\bxi)
=
\prod_{\ell=1}^{d}
K_{p_\ell}^{(\ell)}(t_\ell,\xi_\ell).
$$
\end{proposition}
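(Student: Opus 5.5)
The identity is a finite distributive-law computation, and I will prove it by expanding the right-hand side and matching terms with the definition of $\K_{\A}$. I start from
$$
\prod_{\ell=1}^{d}K_{p_\ell}^{(\ell)}(t_\ell,\xi_\ell)
=\prod_{\ell=1}^{d}\sum_{r_\ell=0}^{p_\ell}t_\ell^{r_\ell}\psi_{r_\ell}^{(\ell)}(\xi_\ell).
$$
Each factor is a finite sum, so the product of $d$ such sums can be expanded by distributivity. This gives a sum over all choices $(r_1,\ldots,r_d)$ with $0\le r_\ell\le p_\ell$, that is, over all $\balpha=(r_1,\ldots,r_d)\in\prod_{\ell}\{0,\ldots,p_\ell\}=\A$.

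The expanded sum is
$$
\sum_{\balpha\in\A}\prod_{\ell=1}^{d}t_\ell^{\alpha_\ell}\psi_{\alpha_\ell}^{(\ell)}(\xi_\ell).
$$
Since all quantities are complex numbers, I can commute the factors in each summand and split them as $\bigl(\prod_\ell t_\ell^{\alpha_\ell}\bigr)\bigl(\prod_\ell\psi_{\alpha_\ell}^{(\ell)}(\xi_\ell)\bigr)$. By the notation $\bt^{\balpha}$ and the tensor-product definition of $\Psi_{\balpha}$, this equals $\bt^{\balpha}\Psi_{\balpha}(\bxi)$. Summing over $\balpha\in\A$ then gives exactly the definition of $\K_{\A}(\bt,\bxi)$.

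The only point needing care is that the index map $(r_1,\ldots,r_d)\mapsto\balpha$ is a bijection onto $\A$. This holds precisely because $\A$ is the full Cartesian product of the coordinate index ranges, so each multi-index in $\A$ appears exactly once and nothing outside $\A$ appears. That is where the tensor-product hypothesis on $\A$ is used; for non-product sets, such as total-degree sets, the factorization fails. If a formal argument is preferred, the same expansion can be done by induction on $d$ using $\A=\A'\times\{0,\ldots,p_d\}$.
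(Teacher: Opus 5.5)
Your proposal is correct and matches the paper's proof: both expand the product of the $d$ one-dimensional sums by distributivity. Both then observe that, because $\A$ is the full Cartesian product $\prod_{\ell}\{0,\ldots,p_\ell\}$, this expansion yields exactly one term $\bt^{\balpha}\Psi_{\balpha}(\bxi)$ for each $\balpha\in\A$ and no other terms.
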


\begin{proof}
Expanding the product of the $d$ one-dimensional sums gives one term
$\bt^{\balpha}\Psi_{\balpha}(\bxi)$ for every
$\balpha\in\prod_{\ell=1}^{d}\{0,\ldots,p_\ell\}$ and no other terms.
\end{proof}

Let $P=|\A|=\prod_{\ell=1}^{d}(p_\ell+1)$.  If the univariate basis tables
$\psi_r^{(\ell)}(\xi_\ell^{(n)})$ are precomputed on $N$ observations, one
probe point $\bt$ can be evaluated by forming $d$ one-dimensional sums and
multiplying them.  The arithmetic scales as
$O(N\sum_{\ell=1}^{d}(p_\ell+1))$, rather than $O(NP)$ for an explicit
summation over all multivariate basis functions.  The corresponding stored
basis tables require $O(N\sum_{\ell=1}^{d}(p_\ell+1))$ entries instead of the
$O(NP)$ entries of a full multivariate design matrix.  Orthogonal-polynomial
three-term recurrences may further reduce basis-evaluation overhead.

For $N_{\mathrm{pr}}$ phase-orbit and coordinate-shift probe points, a direct implementation
therefore has cost
$O(NN_{\mathrm{pr}}\sum_{\ell=1}^{d}(p_\ell+1))$ after the univariate tables have been
formed.  The subsequent Hankel SVDs and reduced pencils involve matrices whose
working rank is controlled by the model-order ceiling $s_{\max}$, not by $P$.
This is the main source of a possible scaling advantage over a method that
repeatedly searches an $N\times P$ dictionary.  The statement is deliberately
restricted to tensor-product candidate sets.  For total-degree or general
finite sets the exact product factorization need not hold, and one may use a
direct finite sum, dynamic programming, or another structured evaluation
scheme.  Consequently, runtime comparisons are meaningful only when the
candidate geometry and the number of CH-PC probes are reported together.

\subsection{Phase-coded Prony representation and coordinate decoding}

Throughout this subsection and the next, ``exact'' refers to the population
probe values generated by $\G_{\A}Y$.  A finite set of noise-free observations
generally produces a quadrature or sampling approximation to these probes;
that distinction is treated in Section~\ref{subsec:noise-propagation}.

Choose radii $0<r_\ell\le1$, $\ell=1,\ldots,d$, and a phase vector
$\btheta=(\theta_1,\ldots,\theta_d)\in[0,2\pi)^d$.  For $k\ge0$, define the
geometric phase orbit by $t_\ell(k)=r_\ell\exp(\ii k\theta_\ell)$.  Evaluating
the generating transform along this orbit gives
\begin{equation*}
m_k=\G_{\A}Y\left(\bt(k)\right)=\sum_{\balpha\in\A}c_{\balpha}\br^{\balpha}\exp\left(\ii k\btheta\cdot\balpha\right).
\end{equation*}
For an exactly $s$-sparse support
$\Sset=\{\balpha^{(1)},\ldots,\balpha^{(s)}\}$, write
$c^{(j)}:=c_{\balpha^{(j)}}$ and set
$a^{(j)}=c^{(j)}\br^{\balpha^{(j)}}$ and
$z^{(j)}=\exp(\ii\btheta\cdot\balpha^{(j)})$, $j=1,\ldots,s$.
The probe sequence becomes
\begin{equation}
m_k=\sum_{j=1}^{s}a^{(j)}\left(z^{(j)}\right)^k,
\label{eq:prony-sequence}
\end{equation}
which is a finite exponential sum.  The number of terms $s$ is not assumed
known.  In exact arithmetic one may select a model-order ceiling $s_{\max}$,
choose Hankel dimensions $R,C\ge s_{\max}$, and recover the actual $s$ from
the Hankel rank.  The lag-zero and lag-one Hankel pair requires the probe
values $m_0,\ldots,m_{R+C-1}$.  For a fixed probe budget, balanced choices
$R\approx C$ maximize the largest identifiable model order.

The scalar node $z^{(j)}$ contains the active multi-index only through the
one-dimensional phase combination $\btheta\cdot\balpha^{(j)}$.  Although this
scalar code is generically injective on a finite candidate set, direct matching
can be poorly conditioned when two phase codes are close.  We therefore
introduce coordinate shifts that leave every Prony node $z^{(j)}$ unchanged
while multiplying its amplitude by a finite root-of-unity code.  This converts
multi-index recovery into $d$ coordinate-wise discrete decoding problems and,
at the same time, creates multiple snapshots sharing the same spectral nodes.

To recover the individual coordinates of the active multi-indices, define,
for each $\ell=1, \ldots, d$,
\begin{equation*}
p_\ell=\max_{\balpha\in\A}\alpha_\ell,\qquad
\gamma_\ell=\frac{2\pi}{p_\ell+1}.
\end{equation*}
The base probe orbit is regarded as snapshot $\nu=0$, namely
\begin{equation*}
\bt^{[0]}(k)=\bt(k).
\end{equation*}
For each coordinate $\ell=1, \ldots, d$, define the $\ell$-th shifted probe
point $\bt^{[\ell]}(k)$ by multiplying only the $\ell$-th component of
$\bt(k)$ by $\exp(\ii\gamma_\ell)$:
\begin{equation*}
t_r^{[\ell]}(k)=
\begin{cases}
\exp(\ii\gamma_\ell)t_\ell(k), & r=\ell,\\
t_r(k), & r\neq\ell,
\end{cases}
\qquad r=1, \ldots, d.
\end{equation*}
Evaluating the generating transform at this shifted probe point gives
\begin{equation}
m_k^{[\ell]}=\G_{\A}Y\left(\bt^{[\ell]}(k)\right)=\sum_{j=1}^{s}a^{(j)}\chi_\ell^{(j)}\left(z^{(j)}\right)^k,
\qquad
\ell=1, \ldots, d,
\label{eq:coordinate-sequence}
\end{equation}
where
\begin{equation*}
\chi_\ell^{(j)}=\exp\left(\ii\gamma_\ell\alpha_\ell^{(j)}\right),
\qquad
j=1, \ldots, s.
\end{equation*}
A coordinate with $p_\ell=0$ has the known degree
$\alpha_\ell=0$ for every candidate and requires no decoding.  We therefore
omit such trivial coordinates from the decoding analysis and assume
$p_\ell\ge1$ for the retained coordinates.  For each retained coordinate
$\ell$, the choice $\gamma_\ell=2\pi/(p_\ell+1)$ maps the admissible degrees
$0, \ldots, p_\ell$ bijectively onto the $(p_\ell+1)$-st roots of unity.
Consequently, $\chi_\ell^{(j)}$ uniquely encodes the coordinate
$\alpha_\ell^{(j)}$.  Moreover, the corresponding coordinate codebook has
minimum pairwise distance
\begin{equation*}
d_\ell=2\sin\left(\frac{\pi}{p_\ell+1}\right).
\end{equation*}
Thus the $d$ shifted snapshots convert recovery of the multi-index
$\balpha^{(j)}$ into $d$ finite, coordinate-wise root-of-unity decoding
problems while preserving the common Prony nodes $z^{(j)}$.

\subsection{Exact Hankel factorization and recovery}

For convenience,  set $m_k^{[0]}=m_k$, while $m_k^{[\ell]}$, $\ell=1, \ldots, d$,  denotes the  coordinate-shifted
sequence defined in \eqref{eq:coordinate-sequence}.

For any positive integer $K$, define the $K\times s$ Vandermonde matrix
$\mathcal V_K=\bigl((z^{(j)})^i\bigr)_{0\le i\le K-1,\,1\le j\le s}$.
Thus $\mathcal V_R$ and $\mathcal V_C$ differ only in their numbers of rows,
which match the row and column dimensions of the Hankel matrices.  For
$R,C\ge s$, define
\begin{equation*}
\mathcal H_\kappa^{[\nu]}=
\left(m_{u+v+\kappa}^{[\nu]}\right)_{0\le u\le R-1,\, 0\le v\le C-1},
\qquad
\kappa=0, 1,
\quad
\nu=0, \ldots, d.
\end{equation*}
In particular, $\mathcal H_0^{[0]}$ is the base Hankel matrix,
$\mathcal H_1^{[0]}$ is its one-step lagged companion, and
$\mathcal H_0^{[\ell]}$ is the unlagged Hankel matrix generated by the
$\ell$-th coordinate-shifted probe sequence.

Set $D=\diag(a^{(1)},\ldots,a^{(s)})$ and
$Z=\diag(z^{(1)},\ldots,z^{(s)})$.  To represent the coordinate snapshots,
let $\Omega_0=I_s$ and
$\Omega_\ell=\diag(\chi_\ell^{(1)},\ldots,\chi_\ell^{(s)})$ for
$\ell=1,\ldots,d$.  We also set $\chi_0^{(j)}=1$ and
$a_\nu^{(j)}=a^{(j)}\chi_\nu^{(j)}$ for $\nu=0,\ldots,d$; hence
$a_0^{(j)}=a^{(j)}$ is the base amplitude and
$a_\ell^{(j)}=a^{(j)}\chi_\ell^{(j)}$ is the amplitude of the $j$th active
mode in the $\ell$th coordinate snapshot.  Then
\begin{equation}
\mathcal H_\kappa^{[\nu]}=
\mathcal V_R D\Omega_\nu Z^\kappa
\mathcal V_C^T,
\qquad
\kappa=0, 1,
\quad
\nu=0, \ldots, d.
\label{eq:exact-hankel-factorization}
\end{equation}
Because $\Omega_\nu$ and $Z$ are diagonal, their order is immaterial; the
displayed order separates the coordinate-code factor from the lag factor.

\begin{theorem}[Exact model-order and single-phase recovery]
\label{thm:exact-single}
Assume that
$a^{(j)}\neq0$, $j=1,\ldots,s$, and that the nodes
$z^{(1)},\ldots,z^{(s)}$ are pairwise distinct.
Let $s\le s_{\max}$ and choose $R,C\ge s_{\max}$.
Then the model order, spectral nodes, active multi-indices, and PCE
coefficients are uniquely determined as follows.

First,
\begin{equation}
\rank\left(\mathcal H_0^{[0]}\right)=s,
\label{eq:exact-rank}
\end{equation}
so the unknown model order is determined by the Hankel rank.

Let $U_s\in\C^{R\times s}$ and $V_s\in\C^{C\times s}$ be orthonormal
bases for the left and right signal subspaces of
$\mathcal H_0^{[0]}$.  Then
$U_s^*\mathcal H_0^{[0]}V_s$ is nonsingular, and the generalized
eigenvalues of the reduced pencil
\begin{equation}
\left(
U_s^*\mathcal H_1^{[0]}V_s,\,
U_s^*\mathcal H_0^{[0]}V_s
\right)
\label{eq:compressed-pencil}
\end{equation}
are exactly
\begin{equation}
\left\{
z^{(1)},\ldots,z^{(s)}
\right\}.
\label{eq:exact-node-recovery}
\end{equation}

Once the nodes are known, the amplitudes of each snapshot
$\nu=0,\ldots,d$ are uniquely determined from
\begin{equation}
\begin{pmatrix}
m_0^{[\nu]}
\\
m_1^{[\nu]}
\\
\vdots
\\
m_{s-1}^{[\nu]}
\end{pmatrix}
=
\mathcal V_s
\begin{pmatrix}
a_\nu^{(1)}
\\
a_\nu^{(2)}
\\
\vdots
\\
a_\nu^{(s)}
\end{pmatrix}.
\label{eq:exact-amplitude-recovery}
\end{equation}
The coordinate codes and active polynomial degrees are then recovered by
\begin{equation}
\chi_\ell^{(j)}=\frac{a_\ell^{(j)}}{a_0^{(j)}},
\qquad
\alpha_\ell^{(j)}=\operatorname*{argmin}_{r\in\{0,\ldots,p_\ell\}}\left|
\chi_\ell^{(j)}-\exp\left(\ii\gamma_\ell r\right)\right|,
\label{eq:exact-coordinate-decoding}
\end{equation}
for $\ell=1,\ldots,d$ and $j=1,\ldots,s$.
Finally,
\begin{equation}
c^{(j)}=\frac{a_0^{(j)}}{\br^{\balpha^{(j)}}},
\qquad
j=1, \ldots, s.
\label{eq:exact-coefficient-recovery}
\end{equation}
Hence the support
$\{\balpha^{(1)},\ldots,\balpha^{(s)}\}$ and the corresponding PCE
coefficients are uniquely recovered.
\end{theorem}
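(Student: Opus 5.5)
The proof will rest on the factorization \eqref{eq:exact-hankel-factorization} together with the observation that a Vandermonde matrix with pairwise distinct nodes and at least $s$ rows has full column rank $s$. This holds because its leading $s\times s$ block is a square Vandermonde matrix with nonzero determinant $\prod_{i<j}(z^{(j)}-z^{(i)})$. Since $R,C\ge s_{\max}\ge s$, both $\mathcal V_R$ and $\mathcal V_C$ have rank $s$. Moreover $D$ is nonsingular because $a^{(j)}\neq0$. Writing $\mathcal H_0^{[0]}=\mathcal V_R D\mathcal V_C^T$ as a product of an injective $R\times s$ factor, an invertible $s\times s$ factor and a surjective $s\times C$ factor then gives rank exactly $s$, which is \eqref{eq:exact-rank}. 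The same argument shows that the column space of $\mathcal H_0^{[0]}$ equals $\operatorname{range}(\mathcal V_R)$ and its row space is spanned by the columns of $\mathcal V_C$. Hence $U_s=\mathcal V_R G$ and $\overline{V_s}=\mathcal V_C F$ for some invertible $s\times s$ matrices $G$ and $F$; equivalently $\mathcal V_C^T V_s$ is invertible.

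For the pencil, I will compute
\[
U_s^*\mathcal H_\kappa^{[0]}V_s=(U_s^*\mathcal V_R)\,D Z^\kappa\,(\mathcal V_C^TV_s),\qquad \kappa=0,1 .
\]
Set $L=U_s^*\mathcal V_R$ and $M=\mathcal V_C^TV_s$. Both are $s\times s$ and invertible, since $U_s^*$ restricted to $\operatorname{range}(\mathcal V_R)=\operatorname{range}(U_s)$ is an isomorphism, and symmetrically for $M$. Then $U_s^*\mathcal H_0^{[0]}V_s=LDM$ is nonsingular. Furthermore, $\det(LDZM-\lambda LDM)=\det(L)\det(D)\det(M)\det(Z-\lambda I)$, so the generalized eigenvalues are exactly the diagonal entries of $Z$, which gives \eqref{eq:exact-node-recovery}. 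The main care point is the identification of the signal subspaces, in particular getting the conjugation right on the right subspace: $\mathcal H=\mathcal V_R D\mathcal V_C^T$, so $\mathcal H^*$ has range $\operatorname{range}(\overline{\mathcal V_C})$, and $V_s$ spans this space. Invertibility of $\mathcal V_C^TV_s$ follows because $\mathcal V_C^T$ is injective on $\operatorname{range}(\overline{\mathcal V_C})$: if $\mathcal V_C^T\overline{\mathcal V_C}w=0$, then $w^*\mathcal V_C^T\overline{\mathcal V_C}w=\norm{\overline{\mathcal V_C}w}^2=0$.

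For the amplitudes, I will read off from \eqref{eq:prony-sequence} and \eqref{eq:coordinate-sequence} that $m_k^{[\nu]}=\sum_j a_\nu^{(j)}(z^{(j)})^k$ for each $\nu$. Restricting to $k=0,\ldots,s-1$ gives \eqref{eq:exact-amplitude-recovery}, and the square Vandermonde matrix $\mathcal V_s$ is invertible because the nodes are distinct, so the amplitudes are unique. The ratio $a_\ell^{(j)}/a_0^{(j)}=\chi_\ell^{(j)}$ is well defined because $a_0^{(j)}\neq0$. This ratio equals $\exp(\ii\gamma_\ell\alpha_\ell^{(j)})$ exactly, with $\alpha_\ell^{(j)}\in\{0,\ldots,p_\ell\}$. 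Since $r\mapsto e^{\ii\gamma_\ell r}$ is a bijection onto the distinct $(p_\ell+1)$-st roots of unity, the argmin in \eqref{eq:exact-coordinate-decoding} is attained uniquely at distance $0$, at $r=\alpha_\ell^{(j)}$. Coordinates with $p_\ell=0$ are trivially $0$.

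Finally, once every $\balpha^{(j)}$ is known, $\br^{\balpha^{(j)}}>0$ because $r_\ell>0$, so \eqref{eq:exact-coefficient-recovery} recovers $c^{(j)}$ from $a^{(j)}=c^{(j)}\br^{\balpha^{(j)}}$. One bookkeeping point remains: the pencil returns the nodes as an unordered set, so the index $j$ is only a labeling. However, each snapshot's amplitude vector is computed against the same ordered node list, so the labels are consistent across $\nu$ and the pairing of $\chi_\ell^{(j)}$ with $a_0^{(j)}$ is unambiguous. Hence the support and coefficients are uniquely recovered.
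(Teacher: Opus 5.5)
Your proof is correct and follows the paper's argument: the factorization $\mathcal H_0^{[0]}=\mathcal V_RD\mathcal V_C^T$ for the rank, nonsingularity of $U_s^*\mathcal V_R$ and $\mathcal V_C^TV_s$ together with the determinant identity for the pencil, inversion of the square Vandermonde matrix $\mathcal V_s$ for the amplitudes, and exact root-of-unity matching for the degrees. Your explicit check that these reduced factors are invertible fills in a step the paper only asserts. In particular, you get the conjugation on the right signal subspace right, using the Gram matrix $\mathcal V_C^T\overline{\mathcal V_C}$.
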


\begin{proof}
By \eqref{eq:exact-hankel-factorization},
\begin{equation*}
\mathcal H_0^{[0]}
=
\mathcal V_R D\mathcal V_C^T.
\end{equation*}
Since the nodes are pairwise distinct,
$\mathcal V_R$ and $\mathcal V_C$ have full column rank, while
$a^{(j)}\neq0$ implies that $D$ is nonsingular.  Hence
\begin{equation*}
\rank\left(\mathcal H_0^{[0]}\right)=s,
\end{equation*}
which proves \eqref{eq:exact-rank}.

We next recover the spectral nodes.  Since $U_s$ and $V_s$ span the left and
right signal subspaces of $\mathcal H_0^{[0]}$, respectively, the matrices
\begin{equation*}
M_L
=
U_s^*\mathcal V_R,
\qquad
M_R
=
\mathcal V_C^T V_s
\end{equation*}
are nonsingular.  Using \eqref{eq:exact-hankel-factorization} for
$\kappa=0$ and $\kappa=1$ gives
\begin{align*}
U_s^*\mathcal H_0^{[0]}V_s=M_LDM_R,\qquad
U_s^*\mathcal H_1^{[0]}V_s=M_LDZM_R.
\end{align*}
Therefore
\begin{align*}
\det\left(U_s^*\mathcal H_1^{[0]}V_s-\lambda U_s^*\mathcal H_0^{[0]}V_s\right)=\det(M_LD)\det(Z-\lambda I_s)\det(M_R).
\end{align*}
Hence the generalized eigenvalues of the reduced pencil are exactly
$z^{(1)}, \ldots, z^{(s)}$.

Since the nodes are pairwise distinct, $\mathcal V_s$ is nonsingular, so
\eqref{eq:exact-amplitude-recovery} uniquely determines the amplitude vector
for each snapshot.  The root-of-unity encoding in
\eqref{eq:coordinate-sequence} then yields
\eqref{eq:exact-coordinate-decoding}.  Finally,
\eqref{eq:exact-coefficient-recovery} follows from
\begin{equation*}
a_0^{(j)}
=
c^{(j)}\br^{\balpha^{(j)}}.
\end{equation*}
\end{proof}


\begin{proposition}[Almost-sure injectivity of phase encoding]
\label{prop:generic-phase}
Let $\A\subset\N_0^d$ be finite and let $\btheta$ have an absolutely
continuous distribution on $[0,2\pi)^d$.  Then, with probability one,
the map
\begin{equation*}
\balpha
\longmapsto
\exp\left(\ii\btheta\cdot\balpha\right)
\end{equation*}
is injective on $\A$.
\end{proposition}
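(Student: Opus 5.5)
The plan is to reduce injectivity to finitely many pairwise conditions and show that each one fails only on a Lebesgue-null subset of $[0,2\pi)^d$. The map fails to be injective exactly when there exist distinct $\balpha,\bbeta\in\A$ with $\exp(\ii\btheta\cdot\balpha)=\exp(\ii\btheta\cdot\bbeta)$. This is equivalent to $\btheta\cdot(\balpha-\bbeta)\in 2\pi\mathbb{Z}$. The bad event is therefore
\begin{equation*}
B=\bigcup_{\substack{\balpha,\bbeta\in\A\\ \balpha\neq\bbeta}}\ \bigcup_{q\in\mathbb{Z}}\left\{\btheta\in[0,2\pi)^d:\ \btheta\cdot(\balpha-\bbeta)=2\pi q\right\}.
\end{equation*}

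For fixed $\balpha\neq\bbeta$, the vector $\bm{w}=\balpha-\bbeta\in\mathbb{Z}^d$ is nonzero. Each set $\{\btheta:\btheta\cdot\bm{w}=2\pi q\}$ is therefore an affine hyperplane in $\mathbb{R}^d$, and so has $d$-dimensional Lebesgue measure zero. Because $[0,2\pi)^d$ is bounded, only finitely many $q$ give a nonempty intersection, namely $|q|\le\sum_\ell|w_\ell|$; a countable union would in any case also suffice. Since $\A$ is finite, there are finitely many pairs. Hence $B$ is a countable (indeed finite) union of null sets and is itself Lebesgue-null.

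Absolute continuity of the law of $\btheta$ with respect to Lebesgue measure then gives $\Pp(\btheta\in B)=0$. Outside $B$ the phase map is injective on $\A$, which is the claim.

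There is no real obstacle; the argument is routine. The one point that needs stating explicitly is that $\balpha\neq\bbeta$ gives a nonzero normal vector $\bm{w}$, so each set really is a hyperplane and not the whole space. I would also note in passing that this is precisely the distinct-node hypothesis of Theorem~\ref{thm:exact-single} applied to every subset $\Sset\subset\A$. Consequently, a phase vector drawn from any absolutely continuous distribution makes that theorem applicable almost surely.
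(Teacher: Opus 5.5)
Your proof is correct and is essentially the paper's argument. A collision forces $\btheta\cdot(\balpha-\bbeta)\in 2\pi\mathbb{Z}$, which for each of the finitely many distinct pairs confines $\btheta$ to finitely many Lebesgue-null hyperplanes; absolute continuity then makes the whole bad set have probability zero, and your nonzero-normal-vector remark and bound on $q$ just make explicit what the paper leaves implicit.
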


\begin{proof}
For any distinct $\balpha,\bbeta\in\A$, a collision requires
\begin{equation*}
\btheta\cdot(\balpha-\bbeta)
\in
2\pi\mathbb Z.
\end{equation*}
This event is contained in a finite union of measure-zero hyperplanes.
Since $\A$ is finite, a union over all distinct pairs still has measure zero.
\end{proof}

Thus the distinct-node assumption in
Theorem~\ref{thm:exact-single} holds almost surely under a random
absolutely continuous phase choice.

\section{Noise-robust coded Hankel recovery}
\label{sec:robust}

This section passes from the exact population representation to the practical
finite-data decoder.  The logic is sequential.  We first use all coordinate
snapshots to form a common-node joint Hankel pair.  We then reduce finite-data
and observation errors to perturbations of this pair.  Random phases quantify
and diversify the phase-dependent spectral conditioning, and the resulting
perturbation and separation scales are combined with discrete coordinate
decoding and phase voting.  The section ends with coefficient refitting and a
complete CH-PC algorithm.

\subsection{Joint-snapshot Hankel representation}

The coordinate-shifted probe sequences introduced for multi-index decoding
share exactly the same Prony nodes as the base sequence.  In the exact
recovery of Section~\ref{sec:formulation}, the nodes can already be identified
from the base Hankel pair alone.  For finite-data recovery, however, all
$d+1$ probe sequences are available and can be used jointly to estimate the
common spectral subspace.  We therefore stack their Hankel matrices before
performing the spectral recovery.
 Define
\begin{equation*}
\mathcal J_\kappa
=
\begin{pmatrix}
\mathcal H_\kappa^{[0]}\\
\mathcal H_\kappa^{[1]}\\
\vdots\\
\mathcal H_\kappa^{[d]}
\end{pmatrix},
\qquad
\kappa=0,1,
\end{equation*}
and
\begin{equation*}
\mathcal W
=
\begin{pmatrix}
\mathcal V_RD\Omega_0\\
\mathcal V_RD\Omega_1\\
\vdots\\
\mathcal V_RD\Omega_d
\end{pmatrix}.
\end{equation*}
The exact factorization \eqref{eq:exact-hankel-factorization} gives
\begin{equation*}
\mathcal J_0
=
\mathcal W\mathcal V_C^T,
\qquad
\mathcal J_1
=
\mathcal WZ\mathcal V_C^T.
\end{equation*}

\begin{proposition}[Common-node joint Hankel representation]
\label{prop:joint-snapshot}
Under the assumptions of Theorem~\ref{thm:exact-single},
$\rank(\mathcal J_0)=s$.  Let
$U_{J,s}\in\C^{(d+1)R\times s}$ and $V_{J,s}\in\C^{C\times s}$ be
orthonormal bases for the left and right signal subspaces of
$\mathcal J_0$.  Then $U_{J,s}^*\mathcal J_0V_{J,s}$ is nonsingular, and the
generalized eigenvalues of the reduced joint pencil
\begin{equation}
\left(
U_{J,s}^*\mathcal J_1V_{J,s},\,
U_{J,s}^*\mathcal J_0V_{J,s}
\right)
\label{eq:joint-compressed-pencil}
\end{equation}
are exactly
$\{z^{(1)},\ldots,z^{(s)}\}$.
\end{proposition}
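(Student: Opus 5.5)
The plan is to copy the proof of Theorem~\ref{thm:exact-single}, with the stacked factor $\mathcal W$ in place of $\mathcal V_RD$. The first step is a rank statement for $\mathcal W\in\C^{(d+1)R\times s}$. Its top block is $\mathcal V_RD\Omega_0=\mathcal V_RD$. Because the nodes are pairwise distinct and $R\ge s$, $\mathcal V_R$ has full column rank, and $D$ is nonsingular since every $a^{(j)}\neq0$. Hence this block already has rank $s$. Appending the further blocks $\mathcal V_RD\Omega_\ell$ cannot lower the column rank, so $\mathcal W$ has full column rank $s$. In the same way $\mathcal V_C$ has full column rank $s$, since $C\ge s$. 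With $\mathcal J_0=\mathcal W\mathcal V_C^T$, this gives $\rank(\mathcal J_0)=s$. We also get $\operatorname{range}(\mathcal J_0)=\operatorname{range}(\mathcal W)$ and $\operatorname{range}(\mathcal J_0^*)=\operatorname{range}(\overline{\mathcal V_C})$.

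The second step is to show that $M_L:=U_{J,s}^*\mathcal W$ and $M_R:=\mathcal V_C^TV_{J,s}$ are nonsingular $s\times s$ matrices. The columns of $U_{J,s}$ form an orthonormal basis of $\operatorname{range}(\mathcal W)$, so we can write $\mathcal W=U_{J,s}M_L$. Since $\mathcal W$ has rank $s$, $M_L$ must be invertible. For the right factor, $V_{J,s}$ spans $\operatorname{range}(\mathcal J_0^*)=\operatorname{range}(\overline{\mathcal V_C})$. Hence $\overline{\mathcal V_C}=V_{J,s}B$ for some invertible $B$. Taking conjugates gives $\mathcal V_C^T=B^*V_{J,s}^*$, and therefore $M_R=B^*V_{J,s}^*V_{J,s}=B^*$, which is nonsingular. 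This identification of the right signal subspace, including the conjugation bookkeeping, is the only point needing care; it is the step that is asserted without detail in the proof of Theorem~\ref{thm:exact-single}, and I would spell it out as above.

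With these matrices in hand, the factorizations $\mathcal J_0=\mathcal W\mathcal V_C^T$ and $\mathcal J_1=\mathcal WZ\mathcal V_C^T$ give
\begin{equation*}
U_{J,s}^*\mathcal J_0V_{J,s}=M_LM_R,\qquad U_{J,s}^*\mathcal J_1V_{J,s}=M_LZM_R .
\end{equation*}
The first product is nonsingular. For the pencil,
\begin{equation*}
\det\bigl(M_LZM_R-\lambda M_LM_R\bigr)=\det(M_L)\det(Z-\lambda I_s)\det(M_R),
\end{equation*}
whose roots are exactly $z^{(1)},\ldots,z^{(s)}$, counted with multiplicity one because the nodes are distinct. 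This establishes the claimed generalized eigenvalues. The argument does not use the coordinate codes $\Omega_\ell$ at all, beyond the fact that they are unimodular diagonal matrices that keep the blocks in the same column structure.
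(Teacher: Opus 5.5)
Your proposal is correct and follows the paper's proof essentially step for step. The common steps are full column rank of $\mathcal W$ from its top block $\mathcal V_RD$, the factorizations $U_{J,s}^*\mathcal J_\kappa V_{J,s}=M_{J,L}Z^\kappa M_{J,R}$, and the determinant identity. Your explicit check that $M_R$ is nonsingular, via $\operatorname{range}(\mathcal J_0^*)=\operatorname{range}(\overline{\mathcal V_C})$, fills in a detail the paper only asserts; the step you call ``taking conjugates'' there is really taking conjugate transposes.
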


\begin{proof}
The first block of $\mathcal W$ is $\mathcal V_RD$, which has full column
rank, so $\mathcal W$ and $\mathcal V_C$ both have rank $s$.  Hence
$\rank(\mathcal J_0)=s$.  Since $U_{J,s}$ and $V_{J,s}$ span the left and
right signal subspaces, respectively,
\begin{equation*}
M_{J, L}=U_{J, s}^*\mathcal W,
\qquad
M_{J, R}=\mathcal V_C^TV_{J, s}
\end{equation*}
are nonsingular.  Therefore
\begin{align*}
U_{J, s}^*\mathcal J_0V_{J, s}=M_{J, L}M_{J, R},\qquad
U_{J, s}^*\mathcal J_1V_{J, s}=M_{J, L}ZM_{J, R}.
\end{align*}
It follows that
\begin{align*}
\det\left(U_{J, s}^*\mathcal J_1V_{J, s}-\lambda U_{J, s}^*\mathcal J_0V_{J, s}\right)=\det(M_{J, L})\det(Z-\lambda I_s)\det(M_{J, R}),
\end{align*}
which proves the claim.
\end{proof}

In finite precision, an SVD of the perturbed $\mathcal J_0$ estimates the
common signal subspace, and the reduced pair in
\eqref{eq:joint-compressed-pencil} is formed from the perturbed joint Hankel
matrices.  Once the common nodes are recovered, Vandermonde least-squares fits
are performed separately for the base and coordinate-shifted probe sequences.
The ratios of shifted to base amplitudes then estimate
$\chi_\ell^{(j)}$.  Thus joint stacking changes the spectral estimation step,
not the subsequent coordinate-decoding rule.

\subsection{Finite-data probes, observation noise, and Hankel perturbation}
\label{subsec:noise-propagation}

The finite-data stage starts from observations
$\Dset_N=\{(\bxi^{(n)},y^{(n)})\}_{n=1}^{N}$, where $\bxi^{(n)}$ is a
realization of $\bXi$.  Unless stated otherwise, we use
\begin{equation}
y^{(n)}
=
Y\left(\bxi^{(n)}\right)+\varepsilon_n,
\qquad
\varepsilon_n\stackrel{\mathrm{iid}}{\sim}\mathcal N(0,\sigma^2).
\label{eq:observation-model}
\end{equation}
Here $\varepsilon_n$ is the scalar error in the $n$th observation, whereas
$\beps=(\varepsilon_1,\ldots,\varepsilon_N)^T$ denotes the complete
observation-noise vector.  Perturbation levels introduced below, such as
$\varepsilon_H(\btheta)$, are scalar quantities.

Let $w_n\ge0$ denote deterministic sampling or quadrature weights normalized
so that $\sum_{n=1}^Nw_n=1$; for Monte Carlo sampling one may take
$w_n=1/N$.  To construct both lag-zero and lag-one
$R\times C$ Hankel matrices, set $K_H=R+C-1$.  For a fixed phase vector
$\btheta$, snapshot $\nu=0,\ldots,d$, and $k=0,\ldots,K_H$, define the
noise-free finite-data probe
\begin{equation*}
m_{k,N}^{[\nu]}
=
\sum_{n=1}^{N}
w_n
Y\left(\bxi^{(n)}\right)
\K_{\A}
\left(
\bt^{[\nu]}(k),\bxi^{(n)}
\right),
\end{equation*}
and the observable probe
\begin{equation*}
\widehat m_k^{[\nu]}
=
\sum_{n=1}^{N}
w_n
y^{(n)}
\K_{\A}
\left(
\bt^{[\nu]}(k),\bxi^{(n)}
\right).
\end{equation*}
The corresponding population probe is
$m_k^{[\nu]}=\G_{\A}Y(\bt^{[\nu]}(k))$.  Thus the three levels
$m_k^{[\nu]}$, $m_{k,N}^{[\nu]}$, and $\widehat m_k^{[\nu]}$ represent,
respectively, the exact population quantity, its noise-free finite-data
approximation, and the actually observed noisy probe.

Define the observation-induced probe error
\begin{equation*}
e_k^{[\nu]}
=
\widehat m_k^{[\nu]}-m_{k,N}^{[\nu]}
=
\sum_{n=1}^{N}
w_n\varepsilon_n
\K_{\A}
\left(
\bt^{[\nu]}(k),\bxi^{(n)}
\right).
\end{equation*}
Stack all pairs $(\nu,k)$ in a fixed order and denote the resulting vectors by
$\widehat{\bm m}_{\mathrm{stk}}(\btheta)$,
$\bm m_{N,\mathrm{stk}}(\btheta)$, and
$\bm m_{\mathrm{stk}}(\btheta)$.  Then
\begin{equation}
\widehat{\bm m}_{\mathrm{stk}}(\btheta)
=
\bm m_{N,\mathrm{stk}}(\btheta)
+
B(\btheta)\beps,
\label{eq:probe-linear-noise}
\end{equation}
where $B(\btheta)\in\C^{(d+1)(K_H+1)\times N}$.  The row indexed by
$(\nu,k)$ has entries
\begin{equation*}
B(\btheta)_{(\nu,k),n}
=
w_n
\K_{\A}
\left(
\bt^{[\nu]}(k),\bxi^{(n)}
\right).
\end{equation*}
Equation~\eqref{eq:probe-linear-noise} separates the two finite-data effects.
The deterministic difference
$\bm m_{N,\mathrm{stk}}-\bm m_{\mathrm{stk}}$ is the finite-sampling or
quadrature error, whereas $B(\btheta)\beps$ is the additional perturbation
caused by observation noise.

\begin{proposition}[Exact conditional probe covariance]
\label{prop:probe-covariance}
Under \eqref{eq:observation-model}, conditional on the input design,
\begin{equation}
\operatorname{Cov}
\left(
\widehat{\bm m}_{\mathrm{stk}}
-
\bm m_{N,\mathrm{stk}}
\right)
=
\sigma^2
B(\btheta)B(\btheta)^*.
\label{eq:probe-covariance}
\end{equation}
\end{proposition}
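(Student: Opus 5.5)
The plan is to use the linear representation \eqref{eq:probe-linear-noise} directly. Conditional on the input design $\{\bxi^{(n)}\}_{n=1}^N$, the weights $w_n$, the phase vector $\btheta$, and hence every entry of $B(\btheta)$, are deterministic. The difference $\widehat{\bm m}_{\mathrm{stk}}-\bm m_{N,\mathrm{stk}}$ is therefore the fixed complex matrix $B(\btheta)$ applied to the random vector $\beps$. The covariance identity should then follow from the standard rule for linear maps of a random vector, specialized to the complex-valued setting.

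The steps, in order, are as follows. First, I would record that \eqref{eq:probe-linear-noise} holds entrywise. This is because $e_k^{[\nu]}=\sum_n w_n\varepsilon_n\K_{\A}(\bt^{[\nu]}(k),\bxi^{(n)})=(B(\btheta)\beps)_{(\nu,k)}$, which uses only the definitions of $\widehat m_k^{[\nu]}$ and $m_{k,N}^{[\nu]}$ and the fact that the noise-free parts $Y(\bxi^{(n)})$ cancel.

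Second, under \eqref{eq:observation-model}, the vector $\beps$ is independent of the design, so conditioning does not change its law. Its conditional mean is $\E[\beps]=0$, and its conditional covariance is $\E[\beps\beps^T]=\sigma^2 I_N$, since the noise is real, iid, and centered. Consequently $\E[B\beps]=0$.

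Third, I would use the (Hermitian) covariance of a complex random vector, $\operatorname{Cov}(X)=\E[(X-\E X)(X-\E X)^*]$, and compute
\begin{equation*}
\E\left[B(\btheta)\beps\,\beps^*B(\btheta)^*\right]
=B(\btheta)\,\E[\beps\beps^T]\,B(\btheta)^*
=\sigma^2B(\btheta)B(\btheta)^*.
\end{equation*}
Here $\beps^*=\beps^T$ because $\beps$ is real, and $B$ can be pulled out of the expectation because it is deterministic given the design. Written entrywise, the same computation reads $\sum_{n,n'}B_{(\nu,k),n}\overline{B_{(\nu',k'),n'}}\E[\varepsilon_n\varepsilon_{n'}]=\sigma^2\sum_n B_{(\nu,k),n}\overline{B_{(\nu',k'),n}}$, which makes the role of independence explicit.

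No step here is a real obstacle. The only point needing care is convention: the statement must be read with the Hermitian covariance, since $B(\btheta)$ is complex. If one also wanted the pseudo-covariance $\E[XX^T]$, it would equal $\sigma^2B(\btheta)B(\btheta)^T$, which generally differs from \eqref{eq:probe-covariance}. I would state this remark explicitly so that \eqref{eq:probe-covariance} is not misread as a full second-order description of the complex perturbation. Gaussianity is not needed for the covariance itself; only the zero mean, unit-scaled independence, and common variance of the $\varepsilon_n$ enter.
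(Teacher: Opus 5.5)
Your proposal is correct and follows the paper's own one-line argument: apply the linear representation \eqref{eq:probe-linear-noise} with $B(\btheta)$ deterministic given the design and phase, and use $\E[\beps\beps^T]=\sigma^2 I_N$. Your extra remarks are accurate refinements rather than a different route: the covariance must be read as the Hermitian one (the pseudo-covariance would be $\sigma^2 B(\btheta)B(\btheta)^T$), and Gaussianity is not needed for this identity.
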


\begin{proof}
Equation~\eqref{eq:probe-linear-noise} and
$\E[\beps\beps^T]=\sigma^2I_N$ give the result directly.
\end{proof}

The covariance is therefore determined by the polynomial basis, radii, phase
design, input points, and weights rather than being an unspecified nuisance
parameter.

\begin{theorem}[Gaussian probe-noise radius]
\label{thm:gaussian-probe-bound}
Let $\beps\sim\mathcal N(0,\sigma^2I_N)$ and let $B\in\C^{N_{\mathrm{pr}}\times N}$ be fixed.
For every $0<\delta<1$, with probability at least $1-\delta$,
\begin{equation*}
\norm{B\beps}_2
\le
\sigma
\left(
\norm{B}_F
+
\norm{B}_2
\sqrt{2\log\frac{1}{\delta}}
\right).
\end{equation*}
\end{theorem}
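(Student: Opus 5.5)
The plan is to use Gaussian concentration for Lipschitz functions. Write $\beps=\sigma\mathbf g$ with $\mathbf g\sim\mathcal N(0,I_N)$ real standard Gaussian. Define $f(\mathbf g)=\norm{B\mathbf g}_2$, so that $\norm{B\beps}_2=\sigma f(\mathbf g)$ and it suffices to bound $f(\mathbf g)$. The argument has two parts: a bound on the mean $\E f(\mathbf g)$, and a deviation bound for $f$ around its mean.

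For the mean, apply Jensen's inequality:
\begin{equation*}
\E f(\mathbf g)\le\bigl(\E\norm{B\mathbf g}_2^2\bigr)^{1/2}.
\end{equation*}
Since $\mathbf g$ is real with $\E[\mathbf g\mathbf g^T]=I_N$, we get
\begin{equation*}
\E\norm{B\mathbf g}_2^2=\E\,\mathbf g^TB^*B\mathbf g=\operatorname{tr}(B^*B)=\norm{B}_F^2 .
\end{equation*}
This is the same computation as in Proposition~\ref{prop:probe-covariance}, since $\operatorname{tr}(BB^*)=\norm{B}_F^2$. Hence $\E f(\mathbf g)\le\norm{B}_F$.

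For the deviation, note that $f$ is Lipschitz on $\mathbb R^N$ with constant $\norm{B}_2$, because
\begin{equation*}
|f(\mathbf g)-f(\mathbf h)|\le\norm{B(\mathbf g-\mathbf h)}_2\le\norm{B}_2\norm{\mathbf g-\mathbf h}_2 .
\end{equation*}
The complex entries of $B$ cause no difficulty: $B$ acts on a real vector, so one can identify $\C^{N_{\mathrm{pr}}}$ with $\mathbb R^{2N_{\mathrm{pr}}}$ via real and imaginary parts, and both the Euclidean norm and the operator norm are preserved. The Gaussian concentration inequality for Lipschitz functions (Tsirelson--Ibragimov--Sudakov) then gives, for $u\ge0$,
\begin{equation*}
\Pp\bigl(f(\mathbf g)\ge\E f(\mathbf g)+u\bigr)\le\exp\bigl(-u^2/(2\norm{B}_2^2)\bigr).
\end{equation*}
Choose $u=\norm{B}_2\sqrt{2\log(1/\delta)}$, so the right-hand side equals $\delta$. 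Combined with $\E f\le\norm{B}_F$, this gives $f(\mathbf g)\le\norm{B}_F+\norm{B}_2\sqrt{2\log(1/\delta)}$ with probability at least $1-\delta$. Multiplying by $\sigma$ yields the claim. If $B=0$ the statement is trivial, so we may assume $\norm{B}_2>0$.

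The only nontrivial ingredient is the concentration inequality, and the main subtlety is using its sharp one-sided form with constant $1/2$ in the exponent; this is exactly what produces the factor $\sqrt{2\log(1/\delta)}$ with no extra constant. I will cite it rather than reprove it. If a self-contained argument were required, I would prove it by Gaussian interpolation (Maurey--Pisier), which gives constant $\pi^2/8$ instead, so the sharp form is worth citing.
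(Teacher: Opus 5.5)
Your proposal is correct and follows the paper's proof essentially step for step. Like the paper, you write $\beps=\sigma\bm g$, use that $\bm g\mapsto\norm{B\bm g}_2$ is $\norm{B}_2$-Lipschitz with the sharp Gaussian tail $\exp(-t^2/(2\norm{B}_2^2))$, bound the mean by Jensen as $\E\norm{B\bm g}_2\le\norm{B}_F$, and take $t=\norm{B}_2\sqrt{2\log(1/\delta)}$.

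One small remark: after realification, the operator norm of $B$ restricted to real inputs can be strictly smaller than $\norm{B}_2$ (e.g.\ $B=(1,\ \ii)$), so it is not literally ``preserved.'' Only the upper bound is needed, though, and your direct Lipschitz estimate already supplies it.
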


\begin{proof}
Write $\beps=\sigma\bm g$ with $\bm g\sim\mathcal N(0,I_N)$.  The map
$\bm g\mapsto\norm{B\bm g}_2$ is $\norm{B}_2$-Lipschitz.  Gaussian
concentration gives
\begin{equation*}
\Pp
\left(
\norm{B\bm g}_2
\ge
\E\norm{B\bm g}_2+t
\right)
\le
\exp
\left(
-\frac{t^2}{2\norm{B}_2^2}
\right).
\end{equation*}
Moreover,
$\E\norm{B\bm g}_2\le(\E\norm{B\bm g}_2^2)^{1/2}=\norm{B}_F$.
Choosing $t=\norm{B}_2\sqrt{2\log(1/\delta)}$ proves the result.
\end{proof}

Let $\Hh_{R,C}(u)$ denote the $R\times C$ Hankel matrix generated by a finite
sequence $u$.

\begin{lemma}[Hankel lifting bound]
\label{lem:hankel-lift}
Let $h=\min\{R,C\}$.  Then
\begin{equation*}
\norm{\Hh_{R,C}(u)}_2
\le
\norm{\Hh_{R,C}(u)}_F
\le
\sqrt{h}\norm{u}_2.
\end{equation*}
The same estimate holds for vertically stacked Hankel matrices when $u$ is
replaced by the stacked vector of their generating sequences.
\end{lemma}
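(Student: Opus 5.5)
The first inequality $\norm{\cdot}_2\le\norm{\cdot}_F$ is the standard comparison between spectral and Frobenius norms; the real content is the bound $\norm{\Hh_{R,C}(u)}_F\le\sqrt{h}\norm{u}_2$. I will prove it by counting how often each sequence entry appears in the Hankel matrix. With $u=(u_0,\ldots,u_{R+C-2})$ (any additional trailing entries of $u$ can only increase $\norm{u}_2$), the $(a,b)$ entry is $u_{a+b}$. Therefore
\begin{equation*}
\norm{\Hh_{R,C}(u)}_F^2=\sum_{k=0}^{R+C-2}n_k\,|u_k|^2,
\qquad
n_k=\#\{(a,b):0\le a\le R-1,\ 0\le b\le C-1,\ a+b=k\}.
\end{equation*}

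The key step is the bound $n_k\le h=\min\{R,C\}$. For fixed $k$, the pairs $(a,b)$ with $a+b=k$ are determined by $a$ alone, so there are at most $R$ of them. They are equally determined by $b$ alone, so there are at most $C$ of them. Hence $n_k\le\min\{R,C\}$, which gives $\norm{\Hh_{R,C}(u)}_F^2\le h\sum_k|u_k|^2\le h\norm{u}_2^2$. Taking square roots finishes the single-matrix case.

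For a vertical stack of Hankel matrices $\Hh_{R,C}(u^{[0]}),\ldots,\Hh_{R,C}(u^{[d]})$, as in $\mathcal J_\kappa$, the squared Frobenius norm is additive over the blocks. Applying the single-block bound to each block and summing gives $\sum_\nu h\norm{u^{[\nu]}}_2^2=h\norm{u_{\mathrm{stk}}}_2^2$. The spectral norm is again dominated by the Frobenius norm.

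No step is a genuine obstacle. The only points needing care are the multiplicity count and the convention on which entries of $u$ generate the matrix. If the stacked vector contains extra lag entries, for example $m_{K_H}$, these only enlarge the right-hand side, so the inequality is unaffected.
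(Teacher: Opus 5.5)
Your proposal is correct and follows essentially the same route as the paper: you bound each anti-diagonal multiplicity by $h=\min\{R,C\}$ to get $\norm{\Hh_{R,C}(u)}_F^2\le h\norm{u}_2^2$, then sum the squared Frobenius norms over the stacked blocks. Your remark that extra entries of $u$ (for example, those not used by a given lag) only enlarge the right-hand side is precisely what the paper's Corollary~\ref{cor:obs-hankel} relies on when it treats each lag's generating sequence as a subvector of the full probe stack.
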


\begin{proof}
If $q_k$ denotes the number of entries on the $k$-th Hankel anti-diagonal,
then $q_k\le h$ and
\begin{equation*}
\norm{\Hh_{R,C}(u)}_F^2
=
\sum_k q_k|u_k|^2
\le
h\norm{u}_2^2.
\end{equation*}
The spectral-norm bound is standard.  Summing the same estimate over stacked
blocks proves the final statement.
\end{proof}

For each $\nu=0,\ldots,d$ and $\kappa=0,1$, define the noise-free finite-data
and noisy Hankel matrices
\begin{equation*}
\mathcal H_{\kappa,N}^{[\nu]}
=
\left(
m_{u+v+\kappa,N}^{[\nu]}
\right)_{u,v},
\qquad
\widehat{\mathcal H}_\kappa^{[\nu]}
=
\left(
\widehat m_{u+v+\kappa}^{[\nu]}
\right)_{u,v},
\end{equation*}
where $0\le u\le R-1$ and $0\le v\le C-1$.  Their observation-induced
difference is the explicit Hankel perturbation
\begin{equation*}
E_{\mathrm{obs},\kappa}^{[\nu]}
=
\widehat{\mathcal H}_\kappa^{[\nu]}
-
\mathcal H_{\kappa,N}^{[\nu]}
=
\left(
e_{u+v+\kappa}^{[\nu]}
\right)_{u,v}.
\end{equation*}
Stacking the snapshots gives
\begin{equation*}
\mathcal J_{\kappa,N}
=
\begin{pmatrix}
\mathcal H_{\kappa,N}^{[0]}\\
\vdots\\
\mathcal H_{\kappa,N}^{[d]}
\end{pmatrix},
\qquad
\widehat{\mathcal J}_\kappa
=
\begin{pmatrix}
\widehat{\mathcal H}_\kappa^{[0]}\\
\vdots\\
\widehat{\mathcal H}_\kappa^{[d]}
\end{pmatrix},
\end{equation*}
and
\begin{equation*}
E_{\mathrm{obs},J,\kappa}
=
\widehat{\mathcal J}_\kappa-\mathcal J_{\kappa,N}
=
\begin{pmatrix}
E_{\mathrm{obs},\kappa}^{[0]}\\
\vdots\\
E_{\mathrm{obs},\kappa}^{[d]}
\end{pmatrix}.
\end{equation*}

\begin{corollary}[Observation-to-Hankel perturbation bound]
\label{cor:obs-hankel}
For a fixed phase $\btheta$, with probability at least $1-\delta$,
\begin{equation*}
\max_{\kappa=0,1}
\norm{E_{\mathrm{obs},J,\kappa}}_2
\le
\Delta_H(\btheta,\delta),
\end{equation*}
where
\begin{equation}
\Delta_H(\btheta,\delta)
=
\sqrt{h}\,\sigma
\left(
\norm{B(\btheta)}_F
+
\norm{B(\btheta)}_2
\sqrt{
2\log\frac{1}{\delta}
}
\right).
\label{eq:hankel-noise-radius}
\end{equation}
\end{corollary}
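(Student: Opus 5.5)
The argument combines the Gaussian probe-noise radius of Theorem~\ref{thm:gaussian-probe-bound} with the Hankel lifting bound of Lemma~\ref{lem:hankel-lift}. Both lags are driven by the same stacked error vector $B(\btheta)\beps$, so a single high-probability event will control both.

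\textbf{Step 1 (structure of the perturbation).} By construction, each block $E_{\mathrm{obs},\kappa}^{[\nu]}=(e_{u+v+\kappa}^{[\nu]})_{u,v}$ is the $R\times C$ Hankel matrix $\Hh_{R,C}$ generated by the finite sequence $(e^{[\nu]}_{\kappa},\ldots,e^{[\nu]}_{\kappa+R+C-2})$. This is a contiguous window of the snapshot error sequence $(e_k^{[\nu]})_{k=0}^{K_H}$, where $K_H=R+C-1$ guarantees that the window fits for both $\kappa=0$ and $\kappa=1$. Consequently $E_{\mathrm{obs},J,\kappa}$ is a vertical stack of $d+1$ Hankel matrices. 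Its generating vector $u_\kappa$ is a subvector of the stacked error vector $\widehat{\bm m}_{\mathrm{stk}}-\bm m_{N,\mathrm{stk}}=B(\btheta)\beps$, which gives $\norm{u_\kappa}_2\le\norm{B(\btheta)\beps}_2$ for each $\kappa$.

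\textbf{Step 2 (lifting).} The stacked version of Lemma~\ref{lem:hankel-lift} then gives
\begin{equation*}
\norm{E_{\mathrm{obs},J,\kappa}}_2\le\norm{E_{\mathrm{obs},J,\kappa}}_F\le\sqrt{h}\,\norm{u_\kappa}_2\le\sqrt{h}\,\norm{B(\btheta)\beps}_2,\qquad \kappa=0,1.
\end{equation*}
The right-hand side does not depend on $\kappa$, so the same bound holds for the maximum over $\kappa$.

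\textbf{Step 3 (probability).} For fixed $\btheta$ and a fixed input design, $B(\btheta)$ is a deterministic matrix, and $\beps\sim\mathcal N(0,\sigma^2I_N)$ by \eqref{eq:observation-model}. Theorem~\ref{thm:gaussian-probe-bound}, applied with $N_{\mathrm{pr}}=(d+1)(K_H+1)$, shows that with probability at least $1-\delta$,
\begin{equation*}
\norm{B(\btheta)\beps}_2\le\sigma\left(\norm{B(\btheta)}_F+\norm{B(\btheta)}_2\sqrt{2\log(1/\delta)}\right).
\end{equation*}
Multiplying by $\sqrt h$ yields $\Delta_H(\btheta,\delta)$ on this one event. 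Because both $\kappa$ are controlled by the same event, no union bound is needed.

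\textbf{Main obstacle.} The delicate point is the bookkeeping in Steps 1 and 2. One must check that the lag-one windows use only indices $k\le K_H$, so they are genuinely subvectors of the stacked error vector. One must also check that the stacked lifting inequality is applied to a concatenation of disjoint subvectors; otherwise the bound could double-count entries and pick up an extra factor.

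A minor technical point is that Theorem~\ref{thm:gaussian-probe-bound} is stated for real Gaussian $\beps$ but complex $B$. The norm map $\bm g\mapsto\norm{B\bm g}_2$ is still $\norm{B}_2$-Lipschitz on $\mathbb R^N$, and $\E\norm{B\bm g}_2^2=\norm{B}_F^2$ still holds, so the theorem applies verbatim. Finally, the bound is conditional on the input design; if the design is random, it holds conditionally for each fixed design.
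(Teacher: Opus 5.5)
Your proposal is correct and follows the paper's proof. The paper applies Theorem~\ref{thm:gaussian-probe-bound} once to the full stacked error $B(\btheta)\beps$, notes that for each lag the generating error sequence is a subvector of that stack, and lifts with the stacked form of Lemma~\ref{lem:hankel-lift}; your extra bookkeeping (the lag-one window ending at $K_H$, a single event covering both lags, and the remark on real Gaussian noise with complex $B$) only makes explicit what the paper's two-line proof leaves implicit.
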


\begin{proof}
Apply Theorem~\ref{thm:gaussian-probe-bound} to the complete stacked probe
error $B(\btheta)\beps$.  For either lag, the generating error sequences form
a subvector of this stack, so Lemma~\ref{lem:hankel-lift} gives the stated
bound.
\end{proof}

The finite-data approximation contributes a second, deterministic perturbation.
Define
\begin{equation*}
E_{\mathrm{int},J,\kappa}(\btheta)
=
\mathcal J_{\kappa,N}(\btheta)-\mathcal J_\kappa(\btheta),
\qquad
\eta_H(\btheta)
=
\max_{\kappa=0,1}
\norm{E_{\mathrm{int},J,\kappa}(\btheta)}_2.
\end{equation*}
Then
\begin{equation*}
\widehat{\mathcal J}_\kappa-\mathcal J_\kappa
=
E_{\mathrm{int},J,\kappa}
+
E_{\mathrm{obs},J,\kappa},
\end{equation*}
so, for a fixed phase, the total perturbation is bounded with probability at
least $1-\delta$ by
$\eta_H(\btheta)+\Delta_H(\btheta,\delta)$.  If the sampling or quadrature rule
is exact for the required probe functionals, then $\eta_H(\btheta)=0$.

For the multi-phase analysis, set
\begin{equation*}
\beta_{\mathrm{op}}
=
\sup_{\btheta\in[0,2\pi)^d}
\norm{B(\btheta)}_2,
\qquad
\overline{\eta}_H
=
\sup_{\btheta\in[0,2\pi)^d}
\eta_H(\btheta).
\end{equation*}
For a fixed finite candidate set, fixed radii, and fixed finite input design,
both quantities are finite.  Define the phase-uniform total radius
\begin{equation}
\overline{\Delta}_{\mathrm{tot}}(\delta)
=
\overline{\eta}_H
+
\sqrt{h}\,\sigma\beta_{\mathrm{op}}
\left(
\sqrt{N}
+
\sqrt{
2\log\frac{1}{\delta}
}
\right).
\label{eq:uniform-hankel-radius}
\end{equation}

\begin{proposition}[Phase-uniform total Hankel perturbation]
\label{prop:uniform-hankel}
Under \eqref{eq:observation-model}, for every $0<\delta<1$, with probability at
least $1-\delta$,
\begin{equation*}
\sup_{\btheta\in[0,2\pi)^d}
\max_{\kappa=0,1}
\norm{
\widehat{\mathcal J}_\kappa(\btheta)
-
\mathcal J_\kappa(\btheta)
}_2
\le
\overline{\Delta}_{\mathrm{tot}}(\delta).
\end{equation*}
\end{proposition}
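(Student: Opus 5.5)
Split the total perturbation as in the discussion preceding \eqref{eq:uniform-hankel-radius}: for every phase,
\begin{equation*}
\widehat{\mathcal J}_\kappa(\btheta)-\mathcal J_\kappa(\btheta)=E_{\mathrm{int},J,\kappa}(\btheta)+E_{\mathrm{obs},J,\kappa}(\btheta).
\end{equation*}
The integration part is deterministic and bounded by $\eta_H(\btheta)\le\overline{\eta}_H$ uniformly in $\btheta$ and $\kappa$, so no probability is spent on it. Everything then reduces to a bound on $\sup_{\btheta}\max_\kappa\norm{E_{\mathrm{obs},J,\kappa}(\btheta)}_2$ that holds simultaneously for all phases on a single event of probability at least $1-\delta$.

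For the observation part, Lemma~\ref{lem:hankel-lift} gives, for each $\btheta$ and $\kappa$,
\begin{equation*}
\norm{E_{\mathrm{obs},J,\kappa}(\btheta)}_2\le\sqrt h\,\norm{B(\btheta)\beps}_2\le\sqrt h\,\norm{B(\btheta)}_2\norm{\beps}_2\le\sqrt h\,\beta_{\mathrm{op}}\norm{\beps}_2 .
\end{equation*}
This holds because the generating error sequences of either lag form a subvector of the full stack $B(\btheta)\beps$. The decisive feature is that the random factor $\norm{\beps}_2$ no longer depends on $\btheta$, so the supremum over the uncountable phase set costs nothing and no net or chaining argument is needed.

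It remains to bound $\norm{\beps}_2$ once. Apply Theorem~\ref{thm:gaussian-probe-bound} with $B=I_N$, for which $\norm{I_N}_F=\sqrt N$ and $\norm{I_N}_2=1$. With probability at least $1-\delta$,
\begin{equation*}
\norm{\beps}_2\le\sigma\bigl(\sqrt N+\sqrt{2\log(1/\delta)}\bigr).
\end{equation*}
Multiplying by $\sqrt h\,\beta_{\mathrm{op}}$ and adding $\overline{\eta}_H$ gives exactly $\overline{\Delta}_{\mathrm{tot}}(\delta)$ in \eqref{eq:uniform-hankel-radius}.

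The one point needing care is that $\beta_{\mathrm{op}}$ and $\overline{\eta}_H$ are finite. Each entry of $B(\btheta)$ and of the probes is a polynomial in the components of $\bt^{[\nu]}(k)$, which lie on circles of radius $r_\ell\le1$. These entries are therefore continuous in $\btheta$ on the compact torus, and so is the norm. This is the step I expect to require the most verification, though it is routine; the probabilistic content is entirely the single Gaussian norm bound. One should also note that this bound is cruder than Corollary~\ref{cor:obs-hankel} (with $\sqrt N\,\norm{B}_2$ in place of $\norm{B}_F$), which is the price of uniformity.
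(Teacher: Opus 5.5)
Your proof is correct and follows the paper's argument: you bound the deterministic integration part by $\overline{\eta}_H$, and you control the observation part for all $\btheta$ at once via $\norm{B(\btheta)\beps}_2\le\beta_{\mathrm{op}}\norm{\beps}_2$ on a single Gaussian-norm event, lifted to both lags by Lemma~\ref{lem:hankel-lift}. Applying Theorem~\ref{thm:gaussian-probe-bound} with $B=I_N$ makes explicit the tail bound $\norm{\bm g}_2\le\sqrt N+\sqrt{2\log(1/\delta)}$ that the paper states without derivation, and your boundedness argument supplies the finiteness of $\beta_{\mathrm{op}}$ and $\overline{\eta}_H$ that the paper only asserts.
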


\begin{proof}
Write $\beps=\sigma\bm g$ with $\bm g\sim\mathcal N(0,I_N)$.  With probability
at least $1-\delta$,
$\norm{\bm g}_2\le\sqrt N+\sqrt{2\log(1/\delta)}$.  On this event,
simultaneously for every phase,
\begin{equation*}
\norm{B(\btheta)\beps}_2
\le
\sigma\beta_{\mathrm{op}}
\left(
\sqrt N+\sqrt{2\log\frac1\delta}
\right).
\end{equation*}
The Hankel lifting bound controls both joint lags, and the deterministic
integration contribution is bounded by $\overline{\eta}_H$.
\end{proof}

The fixed-phase radius \eqref{eq:hankel-noise-radius} is typically sharper
because it uses the actual probe matrix and its Frobenius norm.  The uniform
radius \eqref{eq:uniform-hankel-radius} is more conservative, but it controls
the same realized observation noise simultaneously over all subsequently
drawn phases.  The term $\overline{\eta}_H$ makes explicit that an end-to-end
statement relative to the population Hankel matrices must also account for
finite-sample integration error.

\begin{theorem}[Perturbation-calibrated singular-gap condition]
\label{thm:rank-gap}
Let $\mathcal J_0$ be an exact joint Hankel matrix of rank $s$ and let
$\widehat{\mathcal J}_0=\mathcal J_0+E$.  If
$\norm{E}_2\le\Delta$ and
$\sigma_s(\mathcal J_0)>2\Delta$, then
\begin{equation*}
\sigma_s(\widehat{\mathcal J}_0)>\Delta,
\qquad
\sigma_{s+1}(\widehat{\mathcal J}_0)\le\Delta.
\end{equation*}
Hence thresholding the singular values at $\Delta$ recovers the exact model
order $s$.
\end{theorem}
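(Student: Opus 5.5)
This follows directly from Weyl's inequality for singular values, applied to the additive perturbation $\widehat{\mathcal J}_0=\mathcal J_0+E$. For matrices of the same size, Weyl's inequality gives, for every index $i$,
\begin{equation*}
\left|\sigma_i(\widehat{\mathcal J}_0)-\sigma_i(\mathcal J_0)\right|\le\norm{E}_2\le\Delta .
\end{equation*}
I will quote this as a standard fact. If a self-contained argument is wanted, it follows from the Courant--Fischer min-max characterization $\sigma_i(A)=\min_{\operatorname{codim}\mathcal S<i}\max_{x\in\mathcal S,\norm{x}_2=1}\norm{Ax}_2$ and the triangle inequality $\norm{(A+E)x}_2\le\norm{Ax}_2+\norm{E}_2$, applied in both directions.

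\textbf{Lower bound at index $s$.} Take $i=s$. Then
\begin{equation*}
\sigma_s(\widehat{\mathcal J}_0)\ge\sigma_s(\mathcal J_0)-\Delta>2\Delta-\Delta=\Delta,
\end{equation*}
using the hypothesis $\sigma_s(\mathcal J_0)>2\Delta$.

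\textbf{Upper bound at index $s+1$.} Take $i=s+1$. Since $\rank(\mathcal J_0)=s$ (Proposition~\ref{prop:joint-snapshot}), we have $\sigma_{s+1}(\mathcal J_0)=0$, hence
\begin{equation*}
\sigma_{s+1}(\widehat{\mathcal J}_0)\le 0+\Delta=\Delta .
\end{equation*}
The same bound holds for every $i>s$, which can also be seen via the Eckart--Young characterization $\sigma_{s+1}(\widehat{\mathcal J}_0)=\min_{\rank F\le s}\norm{\widehat{\mathcal J}_0-F}_2\le\norm{\widehat{\mathcal J}_0-\mathcal J_0}_2$. If $s+1$ exceeds $\min\{(d+1)R,C\}$, the singular value is zero by convention and the bound is trivial. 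In the setting of the paper this case does not arise, since $C\ge s_{\max}\ge s$, although $C=s$ is possible.

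\textbf{Model-order conclusion.} Singular values are nonincreasing, so the two bounds give
\begin{equation*}
\#\{i:\sigma_i(\widehat{\mathcal J}_0)>\Delta\}=s .
\end{equation*}
Thresholding the singular values at $\Delta$ therefore returns exactly $s$.

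No step presents a real obstacle. The only care needed is to note that the weak inequality $\le\Delta$ at index $s+1$ and the strict inequality $>\Delta$ at index $s$ are compatible with a strict threshold rule, "count the singular values exceeding $\Delta$." In practice, $\Delta$ would be taken as $\overline{\Delta}_{\mathrm{tot}}(\delta)$ from Proposition~\ref{prop:uniform-hankel}, so the conclusion holds with probability at least $1-\delta$.
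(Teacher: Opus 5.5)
Your proof is correct and is the same argument as the paper's: Weyl's inequality at index $s$ gives $\sigma_s(\widehat{\mathcal J}_0)\ge\sigma_s(\mathcal J_0)-\norm{E}_2>\Delta$, and at index $s+1$, combined with $\sigma_{s+1}(\mathcal J_0)=0$, it gives $\sigma_{s+1}(\widehat{\mathcal J}_0)\le\Delta$. Two small remarks: rank $s$ is already a hypothesis of the theorem, so citing Proposition~\ref{prop:joint-snapshot} is unnecessary; and your edge-case sentence contradicts itself, since $C=s$ is exactly the case where index $s+1$ exceeds the dimension, but the bound is then trivial, so the conclusion is unaffected.
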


\begin{proof}
Weyl's inequality gives
$\sigma_s(\widehat{\mathcal J}_0)
\ge\sigma_s(\mathcal J_0)-\norm{E}_2>\Delta$.
Since $\sigma_{s+1}(\mathcal J_0)=0$, it also gives
$\sigma_{s+1}(\widehat{\mathcal J}_0)\le\norm{E}_2\le\Delta$.
\end{proof}

The radii also enter the perturbation mechanism.  By orthogonality,
\begin{equation*}
\E\left|\K_{\A}(\bt,\bXi)\right|^2
=
\sum_{\balpha\in\A}|\bt^{\balpha}|^2,
\end{equation*}
and along the phase orbit this becomes
$\sum_{\balpha\in\A}\br^{2\balpha}$.
Smaller radii can reduce probe-noise amplification but simultaneously
attenuate an active signal by $\br^{\balpha}$.  The radii are therefore
statistical design parameters rather than purely numerical regularizers.

The role of this subsection is to reduce the different finite-data error
sources to perturbation levels at the joint-Hankel level.  The spectral and
discrete decoding analysis below does not need to distinguish their origin once
$\widehat{\mathcal J}_\kappa-\mathcal J_\kappa$ has been controlled.

\subsection{Random phase redundancy and separation}

The exact recovery theory requires distinct Prony nodes, but numerical
stability depends more strongly on how well these nodes are separated.
This subsection quantifies two complementary aspects of random phase
encoding.  First, we estimate the probability that a single random phase
places two active modes too close on the unit circle.  Second, we show that
independent phases produce increasingly separated aggregate codewords on the
finite candidate set.  The first result motivates repeated phase recovery,
while the second provides a geometric interpretation and a design diagnostic
for the multi-phase ensemble.

A single phase vector maps a candidate multi-index to
$z_{\btheta}(\balpha)=\exp(\ii\btheta\cdot\balpha)$.  Although exact collisions
occur with probability zero, near-collisions can make a Vandermonde factor
poorly conditioned.  We therefore draw independent phase vectors
$\btheta^{(1)},\ldots,\btheta^{(Q)}$ and set
$\Theta=(\btheta^{(1)},\ldots,\btheta^{(Q)})$.  Every candidate multi-index is
assigned the codeword
\begin{equation*}
\mathcal C_{\Theta}(\balpha)
=
\left(
\exp\left(\ii\btheta^{(1)}\cdot\balpha\right),
\ldots,
\exp\left(\ii\btheta^{(Q)}\cdot\balpha\right)
\right).
\end{equation*}
Define the normalized code distance
\begin{equation}
\Delta_{\mathrm{code}}(\Theta)
=
\min_{\balpha\neq\bbeta\in\A}
\left[
\frac{1}{Q}
\sum_{q=1}^{Q}
\left|
\exp\left(\ii\btheta^{(q)}\cdot\balpha\right)
-
\exp\left(\ii\btheta^{(q)}\cdot\bbeta\right)
\right|^2
\right]^{1/2}.
\label{eq:code-distance}
\end{equation}

For an active support $\Sset$, define the single-phase separation
\begin{equation*}
\Delta_{\btheta}(\Sset)
=
\min_{\substack{\balpha,\bbeta\in\Sset\\\balpha\neq\bbeta}}
\left|
\exp\left(\ii\btheta\cdot\balpha\right)
-
\exp\left(\ii\btheta\cdot\bbeta\right)
\right|.
\end{equation*}

\begin{theorem}[Near-collision probability for one random phase]
\label{thm:random-near-collision}
Let $|\Sset|=s$ and let $\btheta$ be uniformly distributed on
$[0,2\pi)^d$.  For every $0<\eta\le2$,
\begin{equation*}
\Pp
\left(
\Delta_{\btheta}(\Sset)<\eta
\right)
\le
\binom{s}{2}
\frac{2}{\pi}
\arcsin
\left(
\frac{\eta}{2}
\right).
\end{equation*}
\end{theorem}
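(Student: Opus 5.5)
The plan is a union bound over the $\binom{s}{2}$ unordered pairs of distinct active multi-indices. For each pair we compute the exact probability that that pair is $\eta$-close. Since $\Delta_{\btheta}(\Sset)<\eta$ holds iff some pair $\balpha\neq\bbeta$ in $\Sset$ satisfies
\[
\bigl|\exp(\ii\btheta\cdot\balpha)-\exp(\ii\btheta\cdot\bbeta)\bigr|<\eta,
\]
it suffices to show that each such pair event has probability at most $\frac{2}{\pi}\arcsin(\eta/2)$.

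Fix a pair and set $\bm m=\balpha-\bbeta\in\mathbb Z^d\setminus\{0\}$. Then
\[
\bigl|\exp(\ii\btheta\cdot\balpha)-\exp(\ii\btheta\cdot\bbeta)\bigr|=\bigl|1-\exp(\ii\btheta\cdot\bm m)\bigr|=2\bigl|\sin(\phi/2)\bigr|,
\]
where $\phi=\btheta\cdot\bm m \bmod 2\pi$. The key step is to show that $\phi$ is uniformly distributed on $[0,2\pi)$ when $\btheta$ is uniform on $[0,2\pi)^d$. Pick a coordinate $\ell$ with $m_\ell\neq0$ and condition on the other coordinates of $\btheta$. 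Then $\phi=(m_\ell\theta_\ell+c)\bmod 2\pi$ with $\theta_\ell$ uniform. The map $\theta\mapsto m_\ell\theta \bmod 2\pi$ wraps the circle $|m_\ell|$ times at constant speed, so it pushes the uniform measure forward to the uniform measure, and a rotation by $c$ preserves it. Integrating over the conditioning gives uniformity of $\phi$. Alternatively, one can check that the Fourier coefficients $\E[e^{\ii n\phi}]=\E[e^{\ii n\btheta\cdot\bm m}]$ vanish for every $n\neq0$, because $n m_\ell\neq0$.

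With $\phi$ uniform, $2|\sin(\phi/2)|<\eta$ is equivalent to $\phi\in[0,2a)\cup(2\pi-2a,2\pi)$ with $a=\arcsin(\eta/2)\in(0,\pi/2]$. This set has measure $4a$, so the probability is $4a/(2\pi)=\frac{2}{\pi}\arcsin(\eta/2)$. The endpoint $\eta=2$ gives probability at most $1$, which is consistent. Summing over the pairs yields the stated bound.

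The only delicate step is the uniformity of $\phi$ modulo $2\pi$ for an arbitrary nonzero integer vector $\bm m$; the Fourier argument disposes of it cleanly. The strict inequality conventions at the arc boundary only affect null sets.
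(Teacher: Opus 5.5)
Your proposal is correct and follows the same route as the paper: a union bound over the $\binom{s}{2}$ pairs, uniformity of $\btheta\cdot(\balpha-\bbeta) \bmod 2\pi$ for a nonzero integer difference vector, and the exact arc-measure computation $\Pp\bigl(2|\sin(\phi/2)|<\eta\bigr)=\frac{2}{\pi}\arcsin(\eta/2)$. Your conditioning and Fourier-coefficient arguments just spell out the uniformity step that the paper attributes to Haar invariance on the circle.
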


\begin{proof}
For fixed $\balpha\neq\bbeta$, let
$\bm{\nu}=\balpha-\bbeta\in\mathbb Z^d\setminus\{0\}$.  Since at least one
component of $\bm{\nu}$ is a nonzero integer, Haar invariance on the circle
implies that $\btheta\cdot\bm{\nu}\bmod 2\pi$ is uniform on $[0,2\pi)$.  If
$\phi$ is uniform on this interval, then
$|e^{\ii\phi}-1|=2|\sin(\phi/2)|$, and hence
\begin{equation*}
\Pp
\left(
\left|
e^{\ii\phi}-1
\right|
<
\eta
\right)
=
\frac{2}{\pi}
\arcsin
\left(
\frac{\eta}{2}
\right).
\end{equation*}
A union bound over the $\binom{s}{2}$ pairs proves the claim.
\end{proof}

\begin{proposition}[Aggregate separation of a random phase codebook]
\label{prop:codebook-separation}
Let $P=|\A|$, and let
$\btheta^{(1)},\ldots,\btheta^{(Q)}$ be independent and uniform on
$[0,2\pi)^d$.  For every $0<t<2$,
\begin{equation}
\Pp
\left(
\Delta_{\mathrm{code}}(\Theta)^2
\le
2-t
\right)
\le
\binom{P}{2}
\exp
\left(
-\frac{Qt^2}{8}
\right).
\label{eq:codebook-concentration}
\end{equation}
Consequently, a sufficient condition for
$\Delta_{\mathrm{code}}(\Theta)>\sqrt{2-t}$ with probability at least
$1-\delta$ is
\begin{equation}
Q
\ge
\frac{8}{t^2}
\log
\left(
\frac{P(P-1)}{2\delta}
\right).
\label{eq:phase-count-logP}
\end{equation}
\end{proposition}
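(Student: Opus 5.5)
For a fixed pair $\balpha\neq\bbeta\in\A$, set $\bm\nu=\balpha-\bbeta\in\mathbb Z^d\setminus\{0\}$ and consider the i.i.d. random variables
\begin{equation*}
X_q=\left|\exp\left(\ii\btheta^{(q)}\cdot\balpha\right)-\exp\left(\ii\btheta^{(q)}\cdot\bbeta\right)\right|^2
=2-2\cos\left(\btheta^{(q)}\cdot\bm\nu\right),
\qquad q=1,\ldots,Q.
\end{equation*}
The plan is to show that each pairwise average $\frac1Q\sum_q X_q$ concentrates around its mean $2$. A union bound over pairs then controls the minimum in \eqref{eq:code-distance}.

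\emph{Mean.} As in the proof of Theorem~\ref{thm:random-near-collision}, some component $\nu_{\ell}$ of $\bm\nu$ is a nonzero integer. Hence $\btheta^{(q)}\cdot\bm\nu \bmod 2\pi$ is uniform on $[0,2\pi)$: condition on the other coordinates and use that $\nu_\ell\theta_\ell\bmod 2\pi$ is uniform when $\theta_\ell$ is. It follows that $\E[\cos(\btheta^{(q)}\cdot\bm\nu)]=0$ and $\E X_q=2$.

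\emph{Boundedness.} Each $X_q$ takes values in $[0,4]$.

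\emph{Hoeffding's inequality.} The one-sided version for variables with range $4$ gives
\begin{equation*}
\Pp\left(\frac1Q\sum_{q=1}^Q X_q\le 2-t\right)\le\exp\left(-\frac{2Q^2t^2}{Q\cdot 16}\right)=\exp\left(-\frac{Qt^2}{8}\right),
\end{equation*}
which is exactly the exponent in \eqref{eq:codebook-concentration}.

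\emph{Union bound.} The event $\Delta_{\mathrm{code}}(\Theta)^2\le 2-t$ occurs if and only if some pair attains an average at most $2-t$. A union bound over the $\binom P2$ unordered pairs of $\A$ therefore gives \eqref{eq:codebook-concentration}.

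\emph{Sample-size condition.} Requiring $\binom P2 e^{-Qt^2/8}\le\delta$ and solving for $Q$ yields \eqref{eq:phase-count-logP}. On the complementary event, $\Delta_{\mathrm{code}}(\Theta)^2>2-t$, and taking square roots gives the stated strict inequality.

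The only step needing care is the uniformity of $\btheta\cdot\bm\nu\bmod 2\pi$ when $\bm\nu$ has several nonzero, possibly large, integer components. The conditioning argument on a single nonzero coordinate handles this, because the map $\theta\mapsto\nu_\ell\theta\bmod 2\pi$ pushes the uniform measure forward to the uniform measure for any nonzero integer $\nu_\ell$. Everything else is a standard application of Hoeffding's inequality.
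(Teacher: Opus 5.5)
Your proposal is correct and follows essentially the same route as the paper: you show $\E X_q=2$ from the uniformity of $\btheta\cdot(\balpha-\bbeta)\bmod 2\pi$, apply one-sided Hoeffding with range $4$ to get $\exp(-Qt^2/8)$, take a union bound over the $\binom{P}{2}$ unordered pairs, and solve for $Q$. Your conditioning argument on a single nonzero coordinate spells out the uniformity step, which the paper simply delegates to the proof of Theorem~\ref{thm:random-near-collision}.
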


\begin{proof}
For a fixed pair $\balpha\neq\bbeta$, set
$X_q=|\exp(\ii\btheta^{(q)}\cdot\balpha)
-\exp(\ii\btheta^{(q)}\cdot\bbeta)|^2$.  The argument of
Theorem~\ref{thm:random-near-collision} gives $\E X_q=2$ and
$0\le X_q\le4$.  Hoeffding's inequality yields
\begin{equation*}
\Pp
\left(
\frac{1}{Q}
\sum_{q=1}^{Q}
X_q
\le
2-t
\right)
\le
\exp
\left(
-\frac{Qt^2}{8}
\right).
\end{equation*}
A union bound over all pairs in $\A$ proves
\eqref{eq:codebook-concentration}; solving for $Q$ gives
\eqref{eq:phase-count-logP}.
\end{proof}

Proposition~\ref{prop:codebook-separation} concerns the aggregate geometry of
the phase ensemble; it does not state that every individual phase is well
conditioned.  Its role is to quantify the redundancy created by repeated phase
encoding and to provide a phase-design diagnostic.  The support-voting result
below instead depends on the success probability of the individual phase
decoder.

\subsection{Stable discrete decoding and phase voting}

We now connect the finite-data perturbation with the phase-dependent spectral
conditioning.  The argument has three steps: the joint-Hankel perturbation
controls the continuous spectral decoder, the resulting node and amplitude
errors are compared with the discrete decoding margins, and independent phase
repetitions are finally aggregated by voting.

For one phase, define the total joint-Hankel perturbation level
\begin{equation*}
\varepsilon_H(\btheta)
=
\max_{\kappa=0,1}
\norm{
\widehat{\mathcal J}_\kappa(\btheta)
-
\mathcal J_\kappa(\btheta)
}_2.
\end{equation*}
The continuous part of the decoder consists of an SVD signal-subspace
estimate, the reduced joint pencil in
\eqref{eq:joint-compressed-pencil}, and Vandermonde least-squares amplitude
fits.  The following local statement records only the stability property
needed for the PCE-specific discrete decoder.

\begin{lemma}[Local stability of the joint spectral decoder]
\label{lem:local-joint-stability}
Fix a phase $\btheta$ for which the assumptions of
Theorem~\ref{thm:exact-single} hold.  Then there exist
$\varepsilon_{\mathrm{loc}}(\btheta)>0$ and finite constants
$K_z(\btheta)$ and $K_a(\btheta)$ such that, whenever
$\varepsilon_H(\btheta)<\varepsilon_{\mathrm{loc}}(\btheta)$, the rank-$s$
SVD-compressed joint decoder can be labeled so that
\begin{align*}
\max_j
\left|
\widehat z^{(j)}-z^{(j)}
\right|
&\le
K_z(\btheta)\varepsilon_H(\btheta),
\\
\max_{j,\nu}
\left|
\widehat a_\nu^{(j)}-a_\nu^{(j)}
\right|
&\le
K_a(\btheta)\varepsilon_H(\btheta).
\end{align*}
The local radius may be chosen so that
\begin{equation*}
\varepsilon_{\mathrm{loc}}(\btheta)
\le
\frac14
\sigma_s
\left(
\mathcal J_0(\btheta)
\right).
\end{equation*}
\end{lemma}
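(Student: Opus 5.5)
The plan is to treat the decoder as a composition of maps that are locally Lipschitz near the exact joint Hankel pair $(\mathcal J_0,\mathcal J_1)=(\mathcal J_0(\btheta),\mathcal J_1(\btheta))$. The constants $K_z,K_a$ will come from local Lipschitz constants; no explicit condition-number formulas are needed. Write $\widehat{\mathcal J}_\kappa=\mathcal J_\kappa+E_\kappa$ with $\max_\kappa\norm{E_\kappa}_2=\varepsilon_H(\btheta)$. First take $\varepsilon_{\mathrm{loc}}\le\frac14\sigma_s(\mathcal J_0)$. By Weyl's inequality, as in Theorem~\ref{thm:rank-gap}, we then have $\sigma_s(\widehat{\mathcal J}_0)\ge\frac34\sigma_s(\mathcal J_0)$ and $\sigma_{s+1}(\widehat{\mathcal J}_0)\le\frac14\sigma_s(\mathcal J_0)$. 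So the rank-$s$ truncation is well defined, with a gap of at least $\frac12\sigma_s(\mathcal J_0)$.

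Wedin's $\sin\Theta$ theorem then gives projectors $\widehat U_s\widehat U_s^*$ and $\widehat V_s\widehat V_s^*$ within $2\varepsilon_H/(\sigma_s(\mathcal J_0)/2)$ of the exact ones. The orthonormal bases themselves are determined only up to a unitary $s\times s$ factor. I would handle this by noting two facts. The generalized eigenvalues of $(\widehat U_s^*\widehat{\mathcal J}_1\widehat V_s,\widehat U_s^*\widehat{\mathcal J}_0\widehat V_s)$ are invariant under $\widehat U_s\mapsto\widehat U_sQ_1$ and $\widehat V_s\mapsto\widehat V_sQ_2$, since this is a strict equivalence of pencils. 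Also, one can always choose $Q_1,Q_2$ (orthogonal Procrustes alignment) so that $\norm{\widehat U_sQ_1-U_s}_2$ and $\norm{\widehat V_sQ_2-V_s}_2$ are bounded by a constant times the projector distance. Thus, without loss of generality, $\norm{\widehat U_s-U_s},\norm{\widehat V_s-V_s}\le c\,\varepsilon_H/\sigma_s(\mathcal J_0)$. It follows that the reduced matrices $\widehat A_\kappa=\widehat U_s^*\widehat{\mathcal J}_\kappa\widehat V_s$ differ from $A_\kappa=U_s^*\mathcal J_\kappa V_s$ by $O(\varepsilon_H)$, with constants depending on $\norm{\mathcal J_1}_2$ and $\sigma_s(\mathcal J_0)$.

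Next, by Proposition~\ref{prop:joint-snapshot}, $A_0$ is nonsingular. Shrinking $\varepsilon_{\mathrm{loc}}$ if necessary so that $\norm{\widehat A_0-A_0}\le\frac12\sigma_{\min}(A_0)$, the pencil eigenvalues are the eigenvalues of $\widehat A_0^{-1}\widehat A_1$, which lies within $O(\varepsilon_H)$ of $A_0^{-1}A_1=M_{J,R}^{-1}ZM_{J,R}$. This matrix is diagonalizable with simple eigenvalues $z^{(j)}$. Bauer--Fike, or the first-order simple-eigenvalue perturbation theory combined with a small enough radius relative to $\min_{i\ne j}|z^{(i)}-z^{(j)}|/\kappa(M_{J,R})$, gives a labeling with $|\widehat z^{(j)}-z^{(j)}|\le\kappa(M_{J,R})\norm{\widehat A_0^{-1}\widehat A_1-A_0^{-1}A_1}\le K_z\varepsilon_H$. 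The radius condition keeps the disks disjoint, so each disk contains exactly one eigenvalue and the labeling is unambiguous.

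For the amplitudes, I would write the least-squares fit for snapshot $\nu$ as $\widehat{\bm a}_\nu=\widehat{\mathcal V}^{+}\widehat{\bm m}^{[\nu]}$. Here $\widehat{\mathcal V}$ is the Vandermonde matrix in the $\widehat z^{(j)}$, with the same number of rows as the fit uses, at most $K_H+1$, and $\widehat{\bm m}^{[\nu]}$ is the corresponding vector of observed probes. Each probe error is an entry of the Hankel perturbation, so it is bounded by $\varepsilon_H$, and the vector error is at most $\sqrt{K_H+1}\,\varepsilon_H$. The Vandermonde matrix is within $O(K_z\varepsilon_H)$ of the exact full-column-rank $\mathcal V$, since $|z^{(j)}|=1$ and powers are Lipschitz on a neighbourhood of the unit circle. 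Further shrinking the radius keeps $\sigma_s(\widehat{\mathcal V})\ge\frac12\sigma_s(\mathcal V)$. The standard pseudoinverse perturbation bound $\norm{\widehat{\mathcal V}^+-\mathcal V^+}\le 2\norm{\mathcal V^+}\norm{\widehat{\mathcal V}^+}\norm{\widehat{\mathcal V}-\mathcal V}$ then gives $K_a$ after a maximum over the $d+1$ snapshots.

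The main obstacle I expect is the basis non-uniqueness and labeling issue: making the SVD-based decoder a genuinely well-defined, locally Lipschitz function of the data. The Procrustes alignment plus pencil-invariance argument is what I would try first. If that becomes messy, a fallback is an implicit-function argument applied to the projector-defined pencil. Each shrinking step yields a finite local radius, and taking the minimum of these, together with $\frac14\sigma_s(\mathcal J_0)$, proves the final radius claim.
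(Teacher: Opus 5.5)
Your proposal is correct and follows essentially the same route as the paper's proof. Both arguments proceed through a Weyl/Wedin signal-subspace perturbation step under $\varepsilon_{\mathrm{loc}}\le\frac14\sigma_s(\mathcal J_0)$. The reduced pencil is then treated as the square matrix $\widehat A_0^{-1}\widehat A_1$, close to $M_{J,R}^{-1}ZM_{J,R}$, with Bauer--Fike and disjoint disks giving the labeling; this is exactly the appendix estimate. Finally, the Vandermonde least-squares fits are shown to be Lipschitz, and the radius is shrunk finitely many times. The only difference is that you spell out what the paper cites as standard perturbation theory: the Procrustes alignment with pencil invariance under unitary basis changes, the entrywise bound on the probe errors by $\varepsilon_H$, and the pseudoinverse perturbation bound.
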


\begin{proof}
Because $\sigma_s(\mathcal J_0)>0$, sufficiently small perturbations preserve
an $s$-dimensional singular subspace, with a perturbation proportional to
$\varepsilon_H$.  The reduced exact pencil has the simple spectrum
$\{z^{(j)}\}_{j=1}^s$ by Proposition~\ref{prop:joint-snapshot}; standard
subspace and matrix-pencil perturbation theory therefore gives a locally
Lipschitz labeling of its eigenvalues
\cite{GolubVanLoan2013,LiEtAl2022,LiuEtAl2024,DingEtAl2024}.  The two Hankel lags together contain every probe entry used in the amplitude
fits; finite-dimensional norm equivalence therefore bounds the corresponding
probe-sequence perturbations by a constant multiple of $\varepsilon_H$.
With distinct nodes, the Vandermonde least-squares maps for the $d+1$ snapshots
are locally Lipschitz in both the nodes and the probe data.  Shrinking the
neighborhood if necessary gives the stated constants and the final
singular-value condition.  A representative reduced-pencil estimate is included in the supplementary material.
\end{proof}

Let
\begin{equation*}
a_{\min}
=
\min_j|a_0^{(j)}|,
\qquad
d_{\mathrm{deg}}
=
\min_{1\le\ell\le d}d_\ell.
\end{equation*}
If the base and shifted amplitudes are perturbed by at most
$\varepsilon_a<a_{\min}/2$, then
\begin{equation*}
\left|
\widehat\chi_\ell^{(j)}
-
\chi_\ell^{(j)}
\right|
\le
\frac{4\varepsilon_a}{a_{\min}}.
\end{equation*}
Hence nearest-root decoding is exact whenever
$\varepsilon_a<a_{\min}d_{\mathrm{deg}}/8$.

Define the single-phase stability radius
\begin{equation}
\varepsilon_{\mathrm{stab}}(\btheta)
=
\min
\left\{
\varepsilon_{\mathrm{loc}}(\btheta),
\frac{
\Delta_{\btheta}(\Sset)
}{
3K_z(\btheta)
},
\frac{
a_{\min}d_{\mathrm{deg}}
}{
8K_a(\btheta)
}
\right\}.
\label{eq:single-phase-stability-radius}
\end{equation}

\begin{proposition}[Exact support recovery inside the stability radius]
\label{prop:single-phase-radius}
Under the exact-sparsity assumptions of Theorem~\ref{thm:exact-single}, if
$\varepsilon_H(\btheta)<\varepsilon_{\mathrm{stab}}(\btheta)$, then
thresholding the singular values of $\widehat{\mathcal J}_0$ at
$\varepsilon_H(\btheta)$ recovers the model order $s$, and the resulting
joint-snapshot decoder recovers the exact PCE support $\Sset$.
\end{proposition}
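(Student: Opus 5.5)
The argument chains three earlier results: Theorem~\ref{thm:rank-gap} for the model order, Lemma~\ref{lem:local-joint-stability} for the continuous spectral quantities, and the nearest-root margin estimate stated just before \eqref{eq:single-phase-stability-radius} for the degrees. Write $\varepsilon_H=\varepsilon_H(\btheta)$.

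\textbf{Model order.} By \eqref{eq:single-phase-stability-radius} and the normalization in Lemma~\ref{lem:local-joint-stability},
\begin{equation*}
\varepsilon_H<\varepsilon_{\mathrm{loc}}(\btheta)\le\tfrac14\sigma_s(\mathcal J_0),
\end{equation*}
so $\sigma_s(\mathcal J_0)>4\varepsilon_H>2\varepsilon_H$. Applying Theorem~\ref{thm:rank-gap} with $\Delta=\varepsilon_H$ and $E=\widehat{\mathcal J}_0-\mathcal J_0$ gives $\sigma_s(\widehat{\mathcal J}_0)>\varepsilon_H\ge\sigma_{s+1}(\widehat{\mathcal J}_0)$. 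Here $\|E\|_2\le\varepsilon_H$ holds by the definition of $\varepsilon_H$, and $\mathcal J_0$ has rank $s$ by Proposition~\ref{prop:joint-snapshot}. Hence thresholding at $\varepsilon_H$ returns exactly $s$, and the decoder operates with the rank-$s$ truncation.

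\textbf{Nodes and amplitudes.} Since $\varepsilon_H<\varepsilon_{\mathrm{loc}}$, Lemma~\ref{lem:local-joint-stability} provides a labeling with $|\widehat z^{(j)}-z^{(j)}|\le K_z\varepsilon_H$ and $|\widehat a_\nu^{(j)}-a_\nu^{(j)}|\le K_a\varepsilon_H$.

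\emph{Node labels.} The second term of \eqref{eq:single-phase-stability-radius} gives $K_z\varepsilon_H<\Delta_{\btheta}(\Sset)/3$. Each estimated node therefore lies in a disc of radius less than $\Delta_{\btheta}/3$ about its true node, and these discs are pairwise disjoint. So the labeling is unambiguous and the $s$ recovered nodes are distinct. In particular, the Vandermonde fits are well posed and the matching between $\widehat z^{(j)}$ and mode $j$ is consistent across all $d+1$ snapshots.

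\emph{Amplitude margin.} The third term gives $\varepsilon_a:=K_a\varepsilon_H<a_{\min}d_{\mathrm{deg}}/8$. Since $d_{\mathrm{deg}}\le2$, this also gives $\varepsilon_a<a_{\min}/4<a_{\min}/2$.

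\textbf{Coordinate decoding.} I would verify the ratio bound directly rather than only cite it:
\begin{equation*}
\widehat\chi-\chi=\frac{(\widehat a_\ell-a_\ell)-\chi(\widehat a_0-a_0)}{\widehat a_0},
\end{equation*}
using $|\chi|=1$ and $|\widehat a_0|\ge a_{\min}-\varepsilon_a>a_{\min}/2$. This yields $|\widehat\chi_\ell^{(j)}-\chi_\ell^{(j)}|\le 4\varepsilon_a/a_{\min}<d_{\mathrm{deg}}/2\le d_\ell/2$. The codebook $\{\exp(\ii\gamma_\ell r)\}$ has minimum pairwise distance $d_\ell$, so the nearest-root rule in \eqref{eq:exact-coordinate-decoding} returns the true $\alpha_\ell^{(j)}$ for every $\ell$ and $j$. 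Assembling the coordinates recovers each $\balpha^{(j)}$ exactly. Because the true multi-indices are distinct, the decoded set is exactly $\Sset$.

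\textbf{Main obstacle.} The only delicate point is the labeling. Lemma~\ref{lem:local-joint-stability} asserts the existence of a labeling, but the decoder pairs amplitudes with nodes through its own ordering. The $\Delta_{\btheta}/3$ separation argument is what guarantees that the ratio $\widehat a_\ell^{(j)}/\widehat a_0^{(j)}$ combines amplitudes belonging to the same mode. I would make this step explicit, since the rest is bookkeeping with the three terms of the minimum.
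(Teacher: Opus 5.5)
Your proposal is correct and follows the paper's proof essentially step for step. The bound $\varepsilon_{\mathrm{stab}}\le\varepsilon_{\mathrm{loc}}\le\sigma_s(\mathcal J_0)/4$ feeds Theorem~\ref{thm:rank-gap} for the model order, the second term of the minimum gives disjoint node neighborhoods and hence a unique labeling, and the third term with the root-of-unity margin gives the exact coordinate degrees. Your explicit derivation of the ratio bound $4\varepsilon_a/a_{\min}$, using $d_{\mathrm{deg}}\le 2$ to secure $\varepsilon_a<a_{\min}/2$, fills in a detail that the paper states without proof just before \eqref{eq:single-phase-stability-radius}.
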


\begin{proof}
Since
$\varepsilon_{\mathrm{stab}}\le\varepsilon_{\mathrm{loc}}
\le\sigma_s(\mathcal J_0)/4$, Theorem~\ref{thm:rank-gap} gives the exact
model order.  Lemma~\ref{lem:local-joint-stability} and the second term in
\eqref{eq:single-phase-stability-radius} place each recovered node in a
disjoint neighborhood of its exact node, so the labeling is unique.  The
third term makes the amplitude error smaller than
$a_{\min}d_{\mathrm{deg}}/8$, and the root-of-unity margin then gives the exact
coordinate degrees.
\end{proof}

For phase $q$, let $\widetilde{\Sset}^{(q)}\subset\A$ be the support returned
by the joint decoder and define
\begin{equation}
v_Q(\balpha)=\frac{1}{Q}\sum_{q=1}^{Q}\bmone
\left\{\balpha\in\widetilde{\Sset}^{(q)}\right\}.
\label{eq:vote-score}
\end{equation}
For a threshold $0<\tau<1$, define
\begin{equation}
\widehat{\Sset}_{\tau}=\left\{\balpha\in\A:v_Q(\balpha)\ge\tau\right\},
\qquad
\widehat s=\left|\widehat{\Sset}_{\tau}\right|.
\label{eq:majority-estimator}
\end{equation}
This thresholded estimator is used when the model order is unknown.  In
controlled ablation studies where the true order $s$ is supplied to every
method, we also use the oracle-order estimator
\begin{equation*}
\widehat{\Sset}^{\mathrm{top}\text{-}s}=\operatorname{Top}_s
\left\{v_Q(\balpha): \balpha\in\A\right\},
\end{equation*}
which retains the $s$ largest vote scores, with a fixed deterministic tie-breaking rule.  When neither the order nor a
reliable perturbation threshold is available, each phase may deliberately
over-recover up to a prescribed ceiling $s_{\max}$ and use phase persistence
for practical order selection.

\begin{theorem}[Conditional phase-voting concentration]
\label{thm:phase-voting}
Fix the observed dataset $\Dset_N$, including its realized observation noise.
Draw the phases independently, and let the phase decoder be deterministic
conditional on $\Dset_N$ and the phase.  Define
$p_{\balpha}(\Dset_N)
=\Pp_{\btheta}(\balpha\in\widetilde{\Sset}(\btheta;\Dset_N)\mid\Dset_N)$.
Suppose $p_{\mathrm T}>\tau>p_{\mathrm F}$ and
\begin{equation*}
\min_{\balpha\in\Sset}p_{\balpha}(\Dset_N)\ge p_{\mathrm T},
\qquad
\max_{\bbeta\in\A\setminus\Sset}p_{\bbeta}(\Dset_N)\le p_{\mathrm F}.
\end{equation*}
Then
\begin{align}
\Pp\left(\widehat{\Sset}_{\tau}\neq\Sset\mid\Dset_N\right)\le
|\Sset|\exp\left(-2Q\left(p_{\mathrm T}-\tau\right)^2\right)+\left(|\A|-|\Sset|\right)\exp\left(-2Q\left(\tau-p_{\mathrm F}\right)^2\right).
\label{eq:voting-bound}
\end{align}
\end{theorem}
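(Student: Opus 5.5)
The plan is to condition on $\Dset_N$ throughout and use that, conditionally, the indicators $X_q(\balpha)=\bmone\{\balpha\in\widetilde{\Sset}^{(q)}\}$ are i.i.d.\ Bernoulli for each fixed $\balpha$. Because the decoder is a deterministic function of $(\btheta,\Dset_N)$ and the phases $\btheta^{(1)},\ldots,\btheta^{(Q)}$ are drawn independently of each other and of the data, the variables $X_1(\balpha),\ldots,X_Q(\balpha)$ are conditionally independent given $\Dset_N$. They share the mean $p_{\balpha}(\Dset_N)$, and they take values in $[0,1]$. Hence $v_Q(\balpha)$ in \eqref{eq:vote-score} is an empirical mean of bounded i.i.d.\ variables, and Hoeffding's inequality applies conditionally. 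One measurability point deserves a sentence: the decoder output must be a measurable function of $\btheta$ for fixed data, so that $p_{\balpha}$ is well defined. This is implicit in the definition of $p_{\balpha}$ in the statement, so I would simply note it.

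Next I would decompose the failure event. By \eqref{eq:majority-estimator}, $\widehat{\Sset}_\tau\neq\Sset$ happens exactly when either some $\balpha\in\Sset$ has $v_Q(\balpha)<\tau$ (a missed true mode) or some $\bbeta\in\A\setminus\Sset$ has $v_Q(\bbeta)\ge\tau$ (a false inclusion). A union bound over these $|\Sset|+(|\A|-|\Sset|)$ events reduces the claim to two one-sided tail estimates for a single candidate.

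For $\balpha\in\Sset$, the assumption gives $\E[v_Q(\balpha)\mid\Dset_N]=p_{\balpha}\ge p_{\mathrm T}>\tau$. The one-sided Hoeffding inequality then gives
\begin{equation*}
\Pp\bigl(v_Q(\balpha)<\tau\mid\Dset_N\bigr)\le\Pp\bigl(v_Q(\balpha)-p_{\balpha}\le-(p_{\balpha}-\tau)\mid\Dset_N\bigr)\le\exp\bigl(-2Q(p_{\balpha}-\tau)^2\bigr)\le\exp\bigl(-2Q(p_{\mathrm T}-\tau)^2\bigr).
\end{equation*}
The last step uses that $p_{\balpha}-\tau\ge p_{\mathrm T}-\tau>0$, so the bound is monotone in the deviation. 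Symmetrically, for $\bbeta\notin\Sset$ we have $p_{\bbeta}\le p_{\mathrm F}<\tau$, and therefore $\Pp(v_Q(\bbeta)\ge\tau\mid\Dset_N)\le\exp(-2Q(\tau-p_{\mathrm F})^2)$. Summing the two families of bounds yields \eqref{eq:voting-bound}.

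There is no real obstacle here. The only step needing care is justifying conditional independence across phases, so that Hoeffding is applied to the phase randomness alone with the observation noise held fixed. Here it matters that the same realized data enter every phase decoder. This dependence is exactly why the statement is conditional on $\Dset_N$ and is not an unconditional bound.
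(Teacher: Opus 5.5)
Your proposal is correct and follows the same route as the paper's proof. Conditional on $\Dset_N$, the vote indicators for each fixed candidate are independent Bernoulli variables; one-sided Hoeffding bounds control missed true modes and false inclusions; and a union bound over $\A$ gives \eqref{eq:voting-bound}. Your version only adds explicit detail the paper leaves implicit: the monotonicity step $p_{\balpha}-\tau\ge p_{\mathrm T}-\tau>0$, and the requirement that the phases be drawn independently of the data.
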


\begin{proof}
For each fixed candidate, the vote indicators are independent Bernoulli
variables conditional on $\Dset_N$ because the phases are independent.
Hoeffding's inequality gives the corresponding lower-tail bound for true
modes and upper-tail bound for false modes.  A union bound over the candidate
set proves \eqref{eq:voting-bound}.
\end{proof}

To connect the phase-dependent stability radius with the finite-data bound,
fix $0<\delta<1$ and define the bad-phase probability
\begin{equation}
\pi_\delta=\Pp_{\btheta}\left(\varepsilon_{\mathrm{stab}}(\btheta)\le\overline{\Delta}_{\mathrm{tot}}(\delta)\right).
\label{eq:bad-phase-probability}
\end{equation}

\begin{theorem}[End-to-end majority-voting recovery]
\label{thm:end-to-end}
Assume exact sparsity with support $\Sset\subset\A$, let the phase vectors be
independent and uniform on $[0,2\pi)^d$, and suppose the observation noise
satisfies \eqref{eq:observation-model}.  If $\pi_\delta<1/2$, then majority
voting at threshold $1/2$ satisfies
\begin{equation}
\Pp\left(\widehat{\Sset}_{1/2}\neq\Sset\right)
\le\delta+|\A|\exp\left[-2Q\left(\frac12-\pi_\delta\right)^2\right].
\label{eq:end-to-end-bound}
\end{equation}
\end{theorem}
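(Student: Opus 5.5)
We would prove \eqref{eq:end-to-end-bound} by conditioning on a good noise event and then invoking Theorem~\ref{thm:phase-voting} with explicit vote probabilities.

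\textbf{Step 1: the good noise event.} Let $\mathcal E_\delta$ be the event of Proposition~\ref{prop:uniform-hankel}, namely
\begin{equation*}
\sup_{\btheta}\varepsilon_H(\btheta)\le\overline{\Delta}_{\mathrm{tot}}(\delta).
\end{equation*}
This event has probability at least $1-\delta$. It depends only on the realized observation noise and the fixed design, not on the phases. Splitting on it gives
\begin{equation*}
\Pp(\widehat{\Sset}_{1/2}\neq\Sset)\le\delta+\E\bigl[\bmone_{\mathcal E_\delta}\Pp(\widehat{\Sset}_{1/2}\neq\Sset\mid\Dset_N)\bigr].
\end{equation*}
It therefore suffices to bound the conditional probability by $|\A|\exp[-2Q(1/2-\pi_\delta)^2]$ on $\mathcal E_\delta$.

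\textbf{Step 2: per-phase success on a good phase.} Fix $\Dset_N\in\mathcal E_\delta$ and call a phase good if $\varepsilon_{\mathrm{stab}}(\btheta)>\overline{\Delta}_{\mathrm{tot}}(\delta)$.
\begin{itemize}
\item For a good phase, $\varepsilon_H(\btheta)\le\overline{\Delta}_{\mathrm{tot}}(\delta)<\varepsilon_{\mathrm{stab}}(\btheta)$.
\item Proposition~\ref{prop:single-phase-radius} then gives $\widetilde{\Sset}(\btheta)=\Sset$ exactly.
\item Phases violating the distinct-node or nonzero-amplitude hypotheses of Theorem~\ref{thm:exact-single} form a null set by Proposition~\ref{prop:generic-phase}. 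We set $\varepsilon_{\mathrm{stab}}=0$ there, so they count as bad and are absorbed into $\pi_\delta$.
\end{itemize}
Hence, conditional on $\Dset_N$:
\begin{itemize}
\item every $\balpha\in\Sset$ is returned with probability $p_{\balpha}\ge1-\pi_\delta=:p_{\mathrm T}$;
\item every $\bbeta\notin\Sset$ is returned with probability $p_{\bbeta}\le\pi_\delta=:p_{\mathrm F}$, since it can only appear on a bad phase.
\end{itemize}
The key point is that $\pi_\delta$ is defined through the deterministic quantity $\overline{\Delta}_{\mathrm{tot}}$. It is therefore a fixed number independent of the realized noise, and the uniform-in-$\btheta$ control from Proposition~\ref{prop:uniform-hankel} is exactly what allows the phases to be drawn after the data.

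\textbf{Step 3: apply the voting bound.} With $\tau=1/2$ and $\pi_\delta<1/2$ we have $p_{\mathrm T}>\tau>p_{\mathrm F}$, and both margins equal $1/2-\pi_\delta$. The phases are independent of the noise, and the decoder is deterministic given $(\Dset_N,\btheta)$, so Theorem~\ref{thm:phase-voting} applies. It gives the conditional bound
\begin{equation*}
\bigl(|\Sset|+|\A|-|\Sset|\bigr)\exp\bigl[-2Q(1/2-\pi_\delta)^2\bigr]=|\A|\exp\bigl[-2Q(1/2-\pi_\delta)^2\bigr].
\end{equation*}
Substituting into Step 1 yields \eqref{eq:end-to-end-bound}.

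\textbf{Main obstacle.} The delicate step is Step 2, specifically measurability and definitional care.
\begin{itemize}
\item $\varepsilon_{\mathrm{stab}}(\btheta)$ must be a measurable function of $\btheta$, or else $\pi_\delta$ should be read as an outer probability. We would handle this by fixing a measurable choice of the constants $K_z$, $K_a$ and $\varepsilon_{\mathrm{loc}}$.
\item The single-phase decoder in Proposition~\ref{prop:single-phase-radius} uses the threshold $\varepsilon_H(\btheta)$, which is unobservable. We would note that thresholding at $\overline{\Delta}_{\mathrm{tot}}(\delta)$ works equally well. Theorem~\ref{thm:rank-gap} applies with $\Delta=\overline{\Delta}_{\mathrm{tot}}$ because $\sigma_s(\mathcal J_0)\ge4\varepsilon_{\mathrm{stab}}>2\overline{\Delta}_{\mathrm{tot}}$ on good phases.
\end{itemize}
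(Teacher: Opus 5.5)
Your proposal is correct and follows the paper's own proof: condition on the phase-uniform perturbation event of Proposition~\ref{prop:uniform-hankel}, use Proposition~\ref{prop:single-phase-radius} to get selection probability at least $1-\pi_\delta$ for true modes and at most $\pi_\delta$ for false ones, then apply Hoeffding's inequality with a union bound (which you package via Theorem~\ref{thm:phase-voting}) and add $\delta$. Your extra remarks are valid refinements that the paper leaves implicit: the null set of colliding phases, measurability of $\varepsilon_{\mathrm{stab}}$, and thresholding at $\overline{\Delta}_{\mathrm{tot}}(\delta)$ instead of the unobservable $\varepsilon_H(\btheta)$.
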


\begin{proof}
By Proposition~\ref{prop:uniform-hankel}, with probability at least
$1-\delta$ the total joint-Hankel perturbation is bounded by
$\overline{\Delta}_{\mathrm{tot}}(\delta)$ simultaneously for every phase
vector.  Condition on this event and on the realized dataset.
Proposition~\ref{prop:single-phase-radius} shows that every phase satisfying
\begin{equation*}
\varepsilon_{\mathrm{stab}}(\btheta)>\overline{\Delta}_{\mathrm{tot}}(\delta)
\end{equation*}
returns the exact support.  Consequently each true mode is selected with
conditional probability at least $1-\pi_\delta$, whereas a false mode can be
selected only on a phase belonging to the complementary bad-phase event and
therefore has conditional selection probability at most $\pi_\delta$.
Hoeffding's inequality and a union bound over $\A$ give the exponential term in
\eqref{eq:end-to-end-bound}.  Adding the probability $\delta$ of leaving the
phase-uniform perturbation event completes the proof.
\end{proof}

The theorem separates four mechanisms: finite-data integration error enters
through $\overline{\eta}_H$, ordinary observation noise through the remaining
part of $\overline{\Delta}_{\mathrm{tot}}(\delta)$, phase-conditioned spectral
stability through $\varepsilon_{\mathrm{stab}}(\btheta)$, and random redundancy
through the exponential factor in $Q$.  When the quadrature rule is exact for
the required probe functionals, $\overline{\eta}_H=0$.

\subsection{Coefficient refit and practical CH-PC algorithm}

The spectral stage is used to identify support.  After a support
$\widehat{\Sset}$ has been selected, the final coefficients are estimated
directly from the original observations by the restricted weighted
least-squares problem
\begin{equation}
\widehat{\bm c}_{\widehat{\Sset}}=\operatorname*{argmin}_{\bm c}\sum_{n=1}^{N}w_n\left|
y^{(n)}-\sum_{\balpha\in\widehat{\Sset}}c_{\balpha}\Psi_{\balpha}
\left(\bxi^{(n)}\right)\right|^2.
\label{eq:restricted-refit}
\end{equation}
This separates support identification from coefficient estimation and avoids
using noisy Vandermonde amplitudes as final coefficient estimates.

A practical CH-PC implementation is summarized as follows.

\begin{enumerate}[label=\textbf{Step \arabic*.},leftmargin=1.8cm]
\item
Choose the finite candidate set $\A$, radii $\br$, Hankel dimensions $R, C$,
and a model-order ceiling $s_{\max}\le\min\{R,C\}$.

\item
Draw $Q$ independent phase vectors.  For moderate candidate sets, one may
optionally compare several phase ensembles using the code distance
\eqref{eq:code-distance}; exhaustive pairwise screening is not required by the
method and should be avoided when $|\A|$ is very large.

\item
Using the same observed model outputs, compute the base and coordinate-shifted
probe sequences for every phase.  Increasing $Q$ changes only postprocessing
and does not require additional evaluations of the forward model.

\item
For each phase, form the noisy joint Hankel pair, estimate the common signal
subspace, and solve the SVD-compressed common-node pencil.  A calibrated total
perturbation bound may be used for rank selection when the integration error
is controlled; otherwise over-recover up to $s_{\max}$ and rely on
cross-phase persistence.

\item
Fit the base and shifted amplitudes using the recovered common nodes and decode
candidate multi-indices from the root-of-unity amplitude ratios.

\item
Compute the phase vote scores \eqref{eq:vote-score} and retain the
majority-supported modes using \eqref{eq:majority-estimator}.  Their
cardinality is the estimated effective model order.

\item
Refit the PCE coefficients on the selected support by
\eqref{eq:restricted-refit}.
\end{enumerate}

\section{Approximate sparsity, sensitivity, and methodological scope}
\label{sec:approx-sparse}

The exact recovery theory identifies a finite active support.  In practical
PCE approximations, however, weak nonzero coefficients are often present.
Rather than introducing a second recovery theory, we show that these weak terms
enter the coded Hankel construction as an additional structured perturbation.
This gives a resolution-based interpretation of dominant support and connects
naturally with coefficient-energy sensitivity measures.  We then record the
main modeling and design issues that delimit the present scope.

\subsection{Approximate sparsity as a structured perturbation}

Suppose
\begin{equation*}
Y_{\A}=\sum_{\balpha\in\Sset}c_{\balpha}\Psi_{\balpha}+\sum_{\balpha\in\Tset}c_{\balpha}\Psi_{\balpha},
\qquad
\Tset=\A\setminus\Sset,
\end{equation*}
where the second sum is weak but not zero.  For the base phase probe,
$m_k=m_k^{\Sset}+m_k^{\Tset}$ and
\begin{equation}
\left|m_k^{\Tset}\right|\le\eta_{\br},
\qquad
\eta_{\br}=\sum_{\balpha\in\Tset}|c_{\balpha}|\br^{\balpha}.
\label{eq:tail-probe-radius}
\end{equation}
The same bound holds for every coordinate-shifted snapshot because the
additional root-of-unity factors have unit modulus.

Let $E_{\mathrm{tail},J,\kappa}$ denote the joint Hankel matrix obtained by
lifting only these tail probes.  Since each of its $(d+1)RC$ entries has
magnitude at most $\eta_{\br}$,
\begin{equation}
\norm{E_{\mathrm{tail}, J, \kappa}}_2
\le\norm{E_{\mathrm{tail},J,\kappa}}_F
\le\sqrt{(d+1)RC}\, \eta_{\br},
\qquad
\kappa=0, 1.
\label{eq:tail-hankel-radius}
\end{equation}
Thus approximate sparsity enters the dominant-support problem as a third
structured perturbation, in addition to finite-data integration error and
observation noise.

This bound also clarifies the interpretation of the recovered support.
Voting does not prove that omitted PCE coefficients are mathematically zero.
Rather, CH-PC identifies modes that are resolvable relative to the combined
scales of dominant amplitudes, radius attenuation, weak-tail energy,
finite-data integration error, observation noise, and spectral conditioning.
This interpretation is consistent with the coefficient-energy definition of a
dominant PCE representation.

\subsection{Sensitivity information}

For an orthonormal PCE, the nonconstant coefficient energy equals the response
variance, and the total Sobol index of input $\Xi_\ell$ is
\begin{align*}
V
&=
\Var(Y_{\A})
=
\sum_{\substack{\balpha\in\A\\\balpha\neq0}}|c_{\balpha}|^2,
\\
S_\ell^T
&=\displaystyle
\sum_{\substack{\balpha\in\A\\\alpha_\ell>0}}|c_{\balpha}|^2
/V.
\end{align*}
Thus a recovered multi-index records both the variables participating in a
mode and the corresponding contribution to the variance decomposition
\cite{Sudret2008}.

\begin{proposition}[Sobol stability under coefficient error]
\label{prop:sobol-stability}
Let $\bm c$ and $\widehat{\bm c}$ be the exact and estimated vectors of
nonconstant coefficients on the same finite candidate set and suppose
$\norm{\widehat{\bm c}-\bm c}_2\le\varepsilon_c$.  Set
$\Delta_c=\varepsilon_c(2\norm{\bm c}_2+\varepsilon_c)$.  If
$V=\norm{\bm c}_2^2>0$ and
$\widehat V=\norm{\widehat{\bm c}}_2^2>0$, then
\begin{equation*}
\left|\widehat S_\ell^T-S_\ell^T\right|
\le {2\Delta_c}/{V}.
\end{equation*}
\end{proposition}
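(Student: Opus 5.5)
The plan is to write both Sobol indices as ratios of partial coefficient energies and to bound the numerator and the denominator perturbations separately, each by $\Delta_c$. For the fixed coordinate $\ell$, let $\mathcal L=\{\balpha\in\A:\balpha\neq0,\ \alpha_\ell>0\}$ and set
\begin{equation*}
A=\sum_{\balpha\in\mathcal L}|c_{\balpha}|^2,\qquad
\widehat A=\sum_{\balpha\in\mathcal L}|\widehat c_{\balpha}|^2 .
\end{equation*}
Then $S_\ell^T=A/V$ and $\widehat S_\ell^T=\widehat A/\widehat V$. Because $\mathcal L$ is a subset of the nonconstant index set, we have $0\le A\le V$ and $0\le\widehat A\le\widehat V$.

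The first step is an energy-perturbation estimate that applies to any subvector. Let $\bm x$ and $\widehat{\bm x}$ be corresponding restrictions of $\bm c$ and $\widehat{\bm c}$ to a common index subset. Then $\norm{\widehat{\bm x}-\bm x}_2\le\varepsilon_c$ and $\norm{\bm x}_2\le\norm{\bm c}_2$. We factor
\begin{equation*}
\norm{\widehat{\bm x}}_2^2-\norm{\bm x}_2^2=\bigl(\norm{\widehat{\bm x}}_2-\norm{\bm x}_2\bigr)\bigl(\norm{\widehat{\bm x}}_2+\norm{\bm x}_2\bigr).
\end{equation*}
By the reverse triangle inequality the first factor is at most $\varepsilon_c$ in modulus. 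The second factor is at most $2\norm{\bm x}_2+\varepsilon_c\le2\norm{\bm c}_2+\varepsilon_c$. Hence $\bigl|\norm{\widehat{\bm x}}_2^2-\norm{\bm x}_2^2\bigr|\le\Delta_c$. Applying this to the full nonconstant vector gives $|\widehat V-V|\le\Delta_c$. Applying it to the restriction to $\mathcal L$ gives $|\widehat A-A|\le\Delta_c$.

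The second step is the algebraic splitting
\begin{equation*}
\widehat S_\ell^T-S_\ell^T=\frac{\widehat A}{\widehat V}-\frac{A}{V}
=\frac{\widehat A-A}{V}+\frac{\widehat A}{\widehat V}\cdot\frac{V-\widehat V}{V},
\end{equation*}
which holds because $\widehat A/\widehat V-\widehat A/V=\widehat A(V-\widehat V)/(V\widehat V)$. The first term is bounded by $\Delta_c/V$. For the second term, $\widehat A/\widehat V\le1$ and $|V-\widehat V|\le\Delta_c$, so it is also at most $\Delta_c/V$. Adding the two bounds gives $|\widehat S_\ell^T-S_\ell^T|\le2\Delta_c/V$.

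The only point needing care is the choice of splitting. Dividing the denominator difference by $\widehat V$ instead of $V$ would produce a bound depending on $\widehat V$. Using the estimated index as the bounded factor $\widehat A/\widehat V\in[0,1]$ keeps $V$ in every denominator. This is also where the hypothesis $\widehat V>0$ is used, since it makes $\widehat S_\ell^T$ well defined. No other step is expected to cause difficulty.
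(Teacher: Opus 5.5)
Your proof is correct and follows the paper's argument: both use the splitting $\widehat A/\widehat V - A/V = (\widehat A - A)/V + \widehat A\,(1/\widehat V - 1/V)$, bound the second term through $\widehat A/\widehat V\le 1$, and rely on the subset energy estimate $|\widehat A - A|,\ |\widehat V - V|\le\Delta_c$. The only difference is that you justify that estimate via the reverse triangle inequality, whereas the paper just asserts it.
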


\begin{proof}
Let $V_\ell^T$ and $\widehat V_\ell^T$ denote the exact and estimated total-effect
numerators.  The difference of the squared norms on any subset of coefficient indices is
bounded by $\Delta_c$.  Therefore
\begin{align*}
\left|\widehat V_\ell^T/\widehat V-V_\ell^T/V\right|
\le |V_\ell^T-\widehat V_\ell^T|/V+\widehat V_\ell^T\left|1/\widehat V-1/V\right|
\le\Delta_c/V+|V-\widehat V|/V
\le2\Delta_c/V.
\end{align*}
\end{proof}

\subsection{Present scope and unresolved issues}

The probe covariance \eqref{eq:probe-covariance} suggests a noise-aware design
problem: phase separation should be balanced against the amplification encoded
by the probe matrices and against attenuation of high-order modes through
$\br^{\balpha}$.  A diagnostic such as
$\Delta_{\mathrm{code}}(\Theta)/
\max_q\norm{B(\btheta^{(q)})}_2$
captures only part of this tradeoff; optimal joint phase--radius design
remains open.

The Gaussian model in \eqref{eq:observation-model} isolates ordinary
observation error.  The linear probe representation extends to independent
sub-Gaussian noise through the corresponding concentration inequalities, and
heteroscedastic noise can be handled by replacing $\sigma^2I$ with a known or
estimated observation covariance.  Heavy-tailed contamination and outliers
would require robust estimators of the probe functionals.

Finite-sample integration error is isolated in the analysis through
$\eta_H(\btheta)$ and its phase-uniform counterpart
$\overline{\eta}_H$, but the present paper does not derive sample-complexity
bounds for these quantities.  The generating identity is exact at the
population level, whereas the data-based probes approximate the required
expectations using the available design.  Sparse grids, quasi-Monte Carlo
rules, importance sampling, and optimized experimental designs may therefore
be as important as the spectral solver in high stochastic dimension.  In the
experiments below, quadrature-based examples are used first to make this
integration error negligible and isolate observation-noise robustness before
a PDE example is introduced.

Finally, CH-PC should be viewed as a spectral support-identification framework,
not as a universal replacement for sparse regression.  Regression methods and
CH-PC have different conditioning mechanisms: sampled-dictionary geometry for
the former, and probe noise, Hankel singular gaps, and phase-node separation
for the latter.  The numerical study below therefore emphasizes recovery
behavior and the roles of joint snapshots and phase redundancy, without
presuming a fixed ordering of methods.


\section{Numerical Experiments}
\label{sec:numerics}

The numerical study is organized to test four distinct aspects of CH-PC:
exact recovery, robustness to observation noise, unknown-order recovery by
phase persistence, and recovery of dominant modes for a stochastic PDE.
Examples~1--3 use the same synthetic Legendre benchmark so that changes in
performance can be attributed to the recovery mechanism rather than to a
change of model.

\paragraph{Common synthetic benchmark.}
For Examples~1--3, $d=3$ and the candidate set is
$$
\A=\{\balpha\in\N_0^3:0\leq\alpha_\ell\leq4,\ \ell=1,2,3\},
\qquad |\A|=125.
$$
The true support and coefficients are
$$
\Sset=
\{(1,0,0),(0,2,0),(1,1,0),(0,0,3)\},
\qquad
\bm c_{\Sset}=(1.2,-0.8,0.5,0.35)^T.
$$
The model outputs are evaluated by the tensor Gauss--Legendre rule with five
nodes per coordinate, hence $N=125$.  When noise is added, we use
$$
y_n^\delta=y_n+\sigma_\delta\varepsilon_n,
\qquad
\varepsilon_n\sim\mathcal N(0,1),
\qquad
\sigma_\delta=
\delta\left(\sum_{n=1}^N w_n y_n^2\right)^{1/2},
$$
and refer to $\delta$ as the relative RMS observation-noise level.
Unless stated otherwise, the radii are $r_\ell=0.8$.  LARS and OMP receive
the same observations, weights, candidate set, and prescribed model order as
CH-PC; coefficient estimates are recomputed by the common restricted
least-squares refit \eqref{eq:restricted-refit}.

\subsection{Example 1: exact sparse recovery}
\label{subsec:exact-numerics}

We first verify the exact finite-rank mechanism using one phase realization
and no observation noise.  The joint Hankel matrices use $R=C=6$ and
$K=12$ probe values, with the true order $s=4$ supplied to the decoder.
The recovered support agrees with $\Sset$ up to permutation, and the relative
coefficient error is
$$
\frac{\norm{\widehat{\bm c}_{\Sset}-\bm c_{\Sset}}_2}
{\norm{\bm c_{\Sset}}_2}
=
2.35\times10^{-15}.
$$
For this phase, the minimum pairwise phase-code distances over the full
candidate set and over the active support are $1.39\times10^{-2}$ and
$6.23\times10^{-1}$, respectively.

Figure~\ref{fig:singular_values} shows a sharp rank-four structure: the first
four singular values lie between approximately $10^1$ and $10^{-1}$, while
the remaining singular values are below $10^{-13}$.  This numerically
confirms the exact Hankel-rank characterization in
Theorem~\ref{thm:exact-single}.

\begin{figure}[t]
\centering
\includegraphics[width=0.70\linewidth]{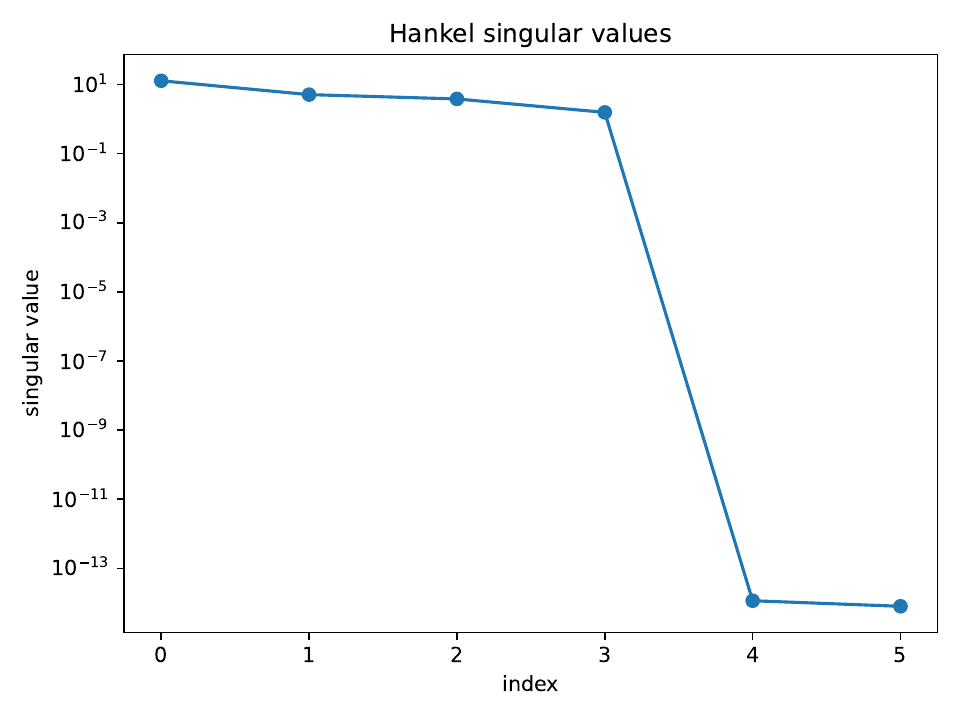}
\caption{Singular values of the joint Hankel matrix in Example~1.  The sharp
gap after the fourth singular value agrees with the true model order $s=4$.}
\label{fig:singular_values}
\end{figure}

\subsection{Example 2: stabilization under observation noise}
\label{subsec:noise-stability}

We next keep the benchmark fixed and compare three spectral variants:
a single-phase decoder based on the base Hankel pair, a one-phase
joint-snapshot decoder, and the complete CH-PC procedure with $Q=9$ phase
encodings and voting.  Here $R=C=6$, $K=12$, and the true order $s=4$ is
supplied to all methods.  For each
$$
\delta\in\{0,0.01,0.05,0.10,0.20,0.30\},
$$
we use 200 independent noise realizations.

All methods recover the exact support in every trial up to $10\%$ noise.
At $20\%$ noise the single-phase frequency decreases to $0.970$, while the
joint-snapshot and full CH-PC procedures remain at $1.000$.  At $30\%$
noise the corresponding frequencies are
$$
0.580,\qquad 0.825,\qquad 0.985,
$$
respectively.  Thus joint stacking provides a substantial gain within one
phase, and independent phase voting provides an additional gain on top of
the joint-snapshot construction.  LARS and OMP retain exact support recovery
throughout this small orthogonal benchmark, so this experiment is used to
isolate the stabilization mechanisms of CH-PC rather than to claim a
universal advantage over sparse regression.

\begin{figure}[t]
\centering
\includegraphics[width=0.74\linewidth]{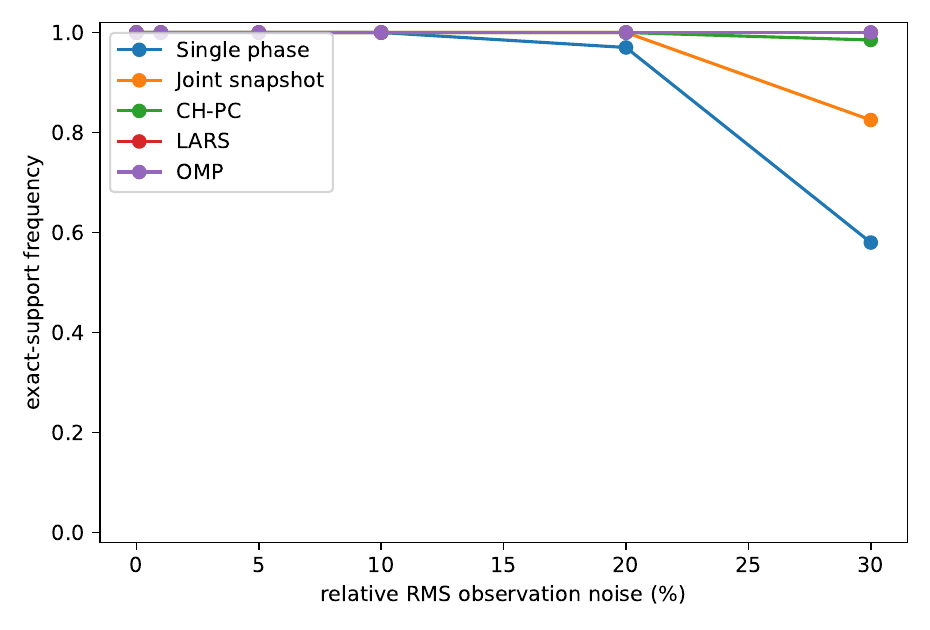}
\caption{Exact-support frequency versus relative RMS observation noise for
Example~2.  Joint snapshots improve the one-phase decoder, and multi-phase
voting provides further stabilization.}
\label{fig:noise-exact-support}
\end{figure}

After support selection, CH-PC, LARS, and OMP use the same restricted
weighted least-squares refit.  Their coefficient-error curves therefore
essentially coincide; the median relative errors for the correctly selected
four-mode model increase from $1.78\times10^{-3}$ at $1\%$ noise to
$5.11\times10^{-2}$ at $30\%$ noise.  We omit the nearly coincident
coefficient-error plot because it does not distinguish the support-selection
methods.

\subsection{Example 3: unknown order and phase persistence}
\label{subsec:unknown-order}

We now remove the known-order assumption while retaining the same benchmark.
The decoder is given only the ceiling $s_{\max}=6$ and uses $R=C=7$,
$K=14$.  Each phase decoder may therefore over-recover, and the final support
is obtained from the majority estimator \eqref{eq:majority-estimator}; its
cardinality is the estimated order $\widehat s$.

First fix $Q=13$ and vary the observation noise.  For $1\%$, $5\%$, and
$10\%$ noise, the exact support-and-order frequencies are $0.975$, $0.970$,
and $0.980$, respectively.  At $20\%$ noise the frequency is $0.950$, with
mean selected order $4.010$ and mean recall $0.9938$; at $30\%$ noise it
decreases to $0.735$, while the mean recall remains $0.9388$.
The noiseless over-ranked calculation retains one persistent extra mode and
therefore gives $\widehat s=5$.  This does not contradict
Theorem~\ref{thm:exact-single}: for exact data, direct singular-value rank
detection is the appropriate mechanism, whereas the present test
deliberately studies persistence filtering after over-recovery.

\begin{figure}[t]
\centering
\includegraphics[width=0.68\linewidth]{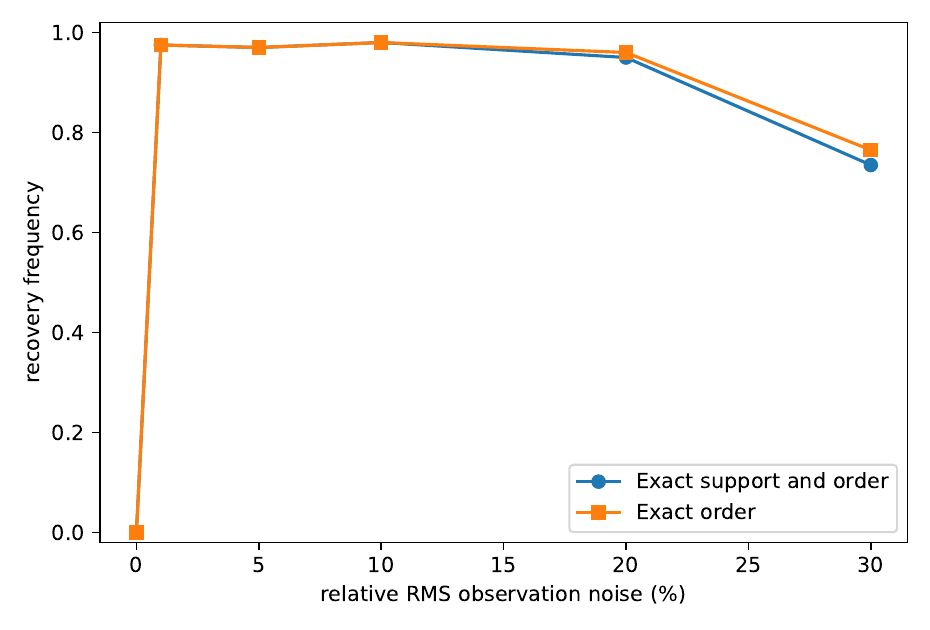}
\caption{Unknown-order recovery versus relative RMS observation noise in
Example~3.  The true order is not supplied to the algorithm.}
\label{fig:ex3-unknown-order-noise}
\end{figure}

To isolate the effect of phase redundancy, we next fix the noise level at
$30\%$ and vary
$$
Q\in\{3,5,7,9,13,17\}.
$$
For each $Q$, results are averaged over six independent phase ensembles and
60 noise realizations per ensemble.  The exact support-and-order frequency
increases from $0.444$ for $Q=3$ to $0.714$ for $Q=7$, $0.806$ for $Q=9$,
and $0.900$ for $Q=17$.  Over the same range, the mean selected order moves
from $3.839$ to $3.978$.  This trend is consistent with the aggregate
code-separation mechanism in Section~\ref{sec:robust}: repeated independent
encodings reduce the influence of poorly conditioned individual phases.

\begin{figure}[t]
\centering
\includegraphics[width=0.68\linewidth]{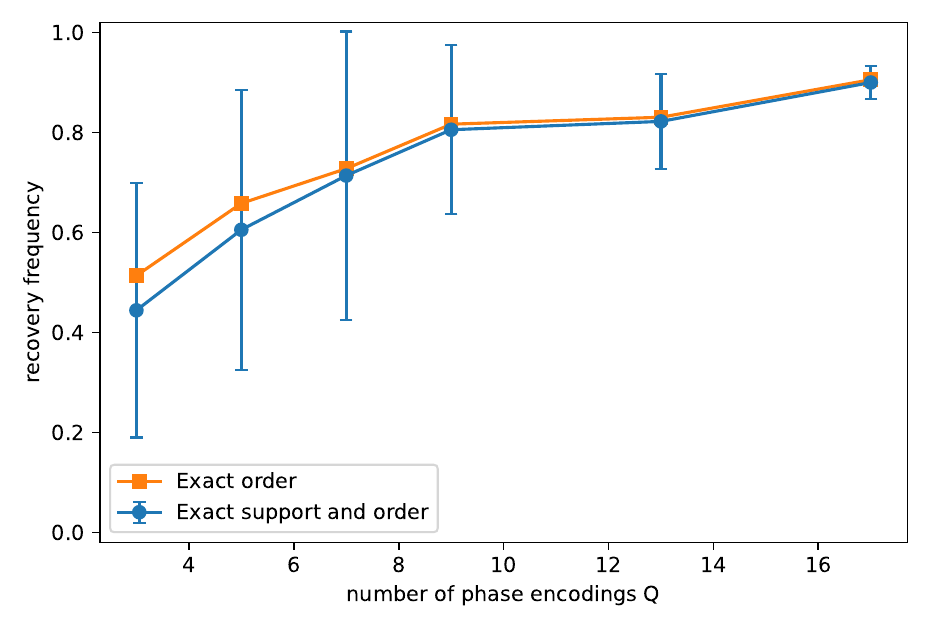}
\caption{Unknown-order recovery at $30\%$ noise as the number of phase
encodings increases.  Error bars show variation across independent phase
ensembles.}
\label{fig:ex3-phase-count}
\end{figure}

\subsection{Example 4: dominant modes for a stochastic Darcy problem}
\label{subsec:darcy}

The final example replaces the synthetic PCE by a quantity of interest
generated from the stochastic Darcy problem
\begin{align}
-\nabla\cdot\left(a(\mathbf x,\bxi)\nabla u(\mathbf x,\bxi)\right)
&=1,
\qquad \mathbf x\in D=(0,1)^2,
\\
u(\mathbf x,\bxi)&=0,
\qquad \mathbf x\in\partial D.
\end{align}
The four independent parameters satisfy $\xi_\ell\sim\mathcal U[-1,1]$, and
the log-permeability is
$$
\log a(\mathbf x,\bxi)
=
\sum_{\ell=1}^{4} b_\ell\xi_\ell\varphi_\ell(\mathbf x),
\qquad
(b_1,b_2,b_3,b_4)=(0.7,0.5,0.4,0.3),
$$
with
\begin{align*}
\varphi_1&=\sin(\pi x_1)\sin(\pi x_2),&
\varphi_2&=\cos(\pi x_1)\sin(\pi x_2),\\
\varphi_3&=\sin(\pi x_1)\cos(\pi x_2),&
\varphi_4&=\cos(2\pi x_1)\cos(\pi x_2).
\end{align*}
The PDE is discretized by finite differences with harmonic averaging of the
permeability.  The quantity of interest is the spatial mean of $u$ over
$D_R=\{\mathbf x\in D:x_1>1/2\}$.

The PCE candidate set contains the nonconstant total-degree Legendre modes
$$
\A=\{\balpha\in\N_0^4:1\leq|\balpha|_1\leq3\},
\qquad |\A|=34.
$$
Because the PDE response is not exactly sparse, a reference dominant support
is obtained from a higher-accuracy seven-point tensor Gauss--Legendre
projection.  The four largest nonconstant coefficients correspond to
$$
\Sset_{\rm ref}
=
\{(1,0,0,0),(0,1,0,0),(1,1,0,0),(2,0,0,0)\},
$$
with approximate values
$$
-4.786\times10^{-3},\quad
2.812\times10^{-3},\quad
-5.02\times10^{-4},\quad
3.52\times10^{-4}.
$$

Recovery uses a five-point rule in each stochastic coordinate, hence
$N=625$, with $r_\ell=0.75$, $R=C=9$, $K=18$, and $Q=31$.  The number of
dominant modes is fixed at four, while their multi-indices are unknown.
For
$$
\delta\in\{0,0.02,0.05,0.10,0.20\},
$$
100 independent noise realizations are used at each nonzero level.

CH-PC, LARS, and OMP recover the complete four-mode reference support in
every trial through $20\%$ relative RMS observation noise, as shown in
Figure~\ref{fig:darcy-support}.  Since the same support is selected, the
subsequent restricted least-squares coefficient estimates are essentially
identical; their median relative errors remain below $1.6\times10^{-2}$ at
$20\%$ noise.  The purpose of this example is therefore not to rank the
three support-selection methods, but to verify that the CH-PC construction
extends from exactly sparse synthetic PCEs to dominant-mode identification
for a nonlinear stochastic PDE response.

\begin{figure}[t]
\centering
\includegraphics[width=0.72\linewidth]{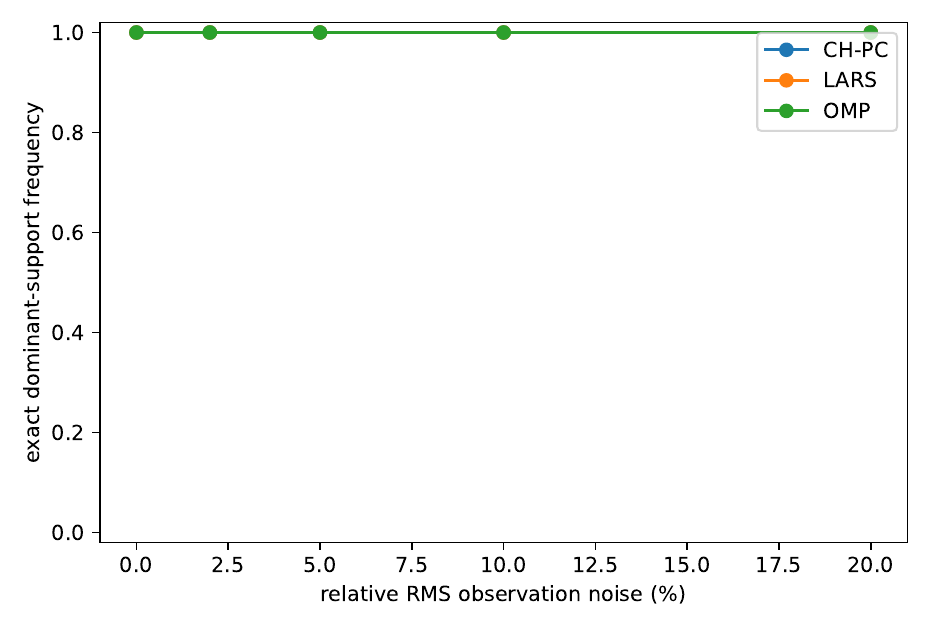}
\caption{Exact recovery frequency of the four reference dominant modes for
the stochastic Darcy problem.  All three methods recover the same dominant
support throughout the tested noise range.}
\label{fig:darcy-support}
\end{figure}

\section{Conclusion}
\label{sec:conclusion}

We developed coded Hankel polynomial chaos as a spectral framework for
identifying dominant PCE modes from input--output information.  The starting
point is a finite generating transform that converts orthogonal
polynomial-chaos coefficients into a coefficient-generating polynomial.  Geometric phase
sampling then produces a finite exponential sum, so model order and spectral
nodes are encoded by low-rank Hankel matrices.  Coordinate shifts preserve the
Prony nodes while attaching finite root-of-unity labels to the amplitudes,
turning continuous spectral recovery into coordinate-wise integer decoding.
This provides a direct route from observed responses to active stochastic
multi-indices---and therefore to the variables, polynomial orders, and
interactions represented by the response---rather than an iterative search
through candidate basis functions.

For finite observations, the exact spectral representation is retained but
perturbed.  Population, finite-data, and observed probes are kept distinct, so
integration error and observation error can be tracked separately at the
joint-Hankel level.  Joint coordinate snapshots exploit repeated observations
of the same spectral nodes, while independent random phase maps re-encode the
same discrete support.  The phase-separation results and majority-voting bound
show quantitatively how this redundancy suppresses dependence on a single
ill-conditioned encoding.  Over-recovery followed by persistence filtering
also provides a practical route to unknown model order.

A second contribution concerns computational structure.  For tensor-product
candidate sets, the generating kernel factorizes exactly into one-dimensional
polynomial sums, so CH-PC can form spectral probes without assembling the full
multivariate PCE design matrix.  This gives the method a complexity profile
different from dictionary-based sparse regression, although the total cost
also depends on the number of phase decoders and no universal runtime ordering
is claimed.

The numerical examples support this interpretation through exact and noisy
Legendre benchmarks, unknown-order recovery by phase persistence, and a
stochastic Darcy problem in which the dominant PCE modes are induced by the
PDE response rather than prescribed in advance.  The next theoretical
questions are sharper sample-complexity bounds in terms of observation count,
active amplitude, phase separation, and code redundancy; noise-aware
optimization of radii and phases; and structured probe evaluation for
non-tensor candidate sets.  These directions would place the spectral and
regression viewpoints on a common quantitative footing and clarify the regimes
in which each is most effective.

\appendix

\section{A square-pencil perturbation estimate}
\label{app:pencil}

This appendix records the standard reduced-pencil estimate used in the local
stability argument of Section~\ref{sec:robust}.  It is separated from the main
text because the estimate is supporting linear algebra rather than a
PCE-specific ingredient.  Let
$T=G_0^{-1}G_1$ and
$\widehat T=(G_0+E_0)^{-1}(G_1+E_1)$.

\begin{proposition}
If $\rho_0=\norm{G_0^{-1}E_0}_2<1$, then
\begin{equation}
\norm{\widehat T-T}_2
\le
\frac{\norm{G_0^{-1}}_2}{1-\rho_0}
\left(\norm{E_1}_2+\norm{E_0}_2\norm{T}_2\right).
\label{eq:pencil-perturbation}
\end{equation}
If $T=XZX^{-1}$ is diagonalizable, every eigenvalue of $\widehat T$ lies
within $\kappa_2(X)\norm{\widehat T-T}_2$ of the exact spectrum.
\end{proposition}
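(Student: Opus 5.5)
The plan is to factor the perturbed inverse through a Neumann series and then obtain the spectral statement from Bauer--Fike. Since $\rho_0=\norm{G_0^{-1}E_0}_2<1$, the matrix $G_0+E_0=G_0(I+G_0^{-1}E_0)$ is invertible. The Neumann series then gives
\begin{equation*}
\norm{(I+G_0^{-1}E_0)^{-1}}_2\le\frac{1}{1-\rho_0},
\qquad
\norm{(G_0+E_0)^{-1}}_2\le\frac{\norm{G_0^{-1}}_2}{1-\rho_0}.
\end{equation*}
This supplies the prefactor in \eqref{eq:pencil-perturbation}.

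Next, I would avoid expanding $\widehat T-T$ directly and instead use the identity
\begin{equation*}
\widehat T-T=(G_0+E_0)^{-1}\bigl[(G_1+E_1)-(G_0+E_0)T\bigr].
\end{equation*}
Because $G_0T=G_1$, the bracket reduces to $E_1-E_0T$. Therefore
\begin{equation*}
\widehat T-T=(G_0+E_0)^{-1}(E_1-E_0T).
\end{equation*}
Taking norms, applying the bound from the first step, and using the triangle inequality $\norm{E_1-E_0T}_2\le\norm{E_1}_2+\norm{E_0}_2\norm{T}_2$ yields \eqref{eq:pencil-perturbation}.

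For the eigenvalue claim I would invoke the Bauer--Fike theorem. Write $\widehat T=T+F$ with $F=\widehat T-T$, and let $\widehat\lambda$ be an eigenvalue of $\widehat T$ that is not an eigenvalue of $T$. Then $X^{-1}\widehat TX=Z+X^{-1}FX$, and $\widehat\lambda I-Z$ is invertible. Hence $I-(\widehat\lambda I-Z)^{-1}X^{-1}FX$ is singular, which forces
\begin{equation*}
1\le\norm{(\widehat\lambda I-Z)^{-1}}_2\,\kappa_2(X)\norm{F}_2.
\end{equation*}
Since $Z$ is diagonal, $\norm{(\widehat\lambda I-Z)^{-1}}_2=1/\min_j|\widehat\lambda-z_j|$, which gives $\min_j|\widehat\lambda-z_j|\le\kappa_2(X)\norm{\widehat T-T}_2$.

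There is no real obstacle here. The only step needing care is the choice of the identity in the second paragraph, which avoids the messier expansion $(G_0+E_0)^{-1}-G_0^{-1}$ and produces exactly the stated constant.
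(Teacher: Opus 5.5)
Your proposal is correct and follows the paper's argument. The paper likewise factors $(G_0+E_0)^{-1}=(I+G_0^{-1}E_0)^{-1}G_0^{-1}$ and bounds the first factor by $(1-\rho_0)^{-1}$; its ``subtraction'' step is your identity $\widehat T-T=(G_0+E_0)^{-1}(E_1-E_0T)$, and it cites Bauer--Fike for the eigenvalue claim, which you instead reprove inline.
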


\begin{proof}
Use
$(G_0+E_0)^{-1}=(I+G_0^{-1}E_0)^{-1}G_0^{-1}$ and
$\norm{(I+G_0^{-1}E_0)^{-1}}_2\le(1-\rho_0)^{-1}$.  The first estimate follows
by subtraction and the second is the Bauer--Fike theorem.
\end{proof}

For coordinate $\ell$, the minimum distance between adjacent root-of-unity
codes is $d_\ell=2\sin(\pi/(p_\ell+1))$.  Hence nearest-root decoding is
exact whenever
$|\widehat\chi_\ell^{(j)}-\chi_\ell^{(j)}|<d_\ell/2$.

\section{Regression baselines}
\label{app:baselines}

The main text treats LARS and OMP only as established reference methods.  For
all numerical comparisons, they receive exactly the same model evaluations,
input points, weights, and finite PCE candidate set as CH-PC.  When the candidate
set is denoted by $\A$, the corresponding weighted regression matrix has entries
$\Phi_{n,\balpha}=\sqrt{w_n}\Psi_{\balpha}(\bxi^{(n)})$.  In oracle-order
experiments, the true number of retained modes is supplied to all three
methods so that the comparison concerns support identification rather than
model-order selection.  After support selection, all reported coefficients are
recomputed by the same restricted weighted least-squares refit
\eqref{eq:restricted-refit}.  Thus the numerical comparison isolates the
support-selection stage and does not depend on different coefficient-refitting
conventions.  The implementations use the LARS and orthogonal matching pursuit
routines in scikit-learn.

\bibliographystyle{plain}  
\bibliography{myref}

\end{document}